% SIAM Article Template
\documentclass[onefignum,onetabnum]{siamart171218}

% Information that is shared between the article and the supplement
% (title and author information, macros, packages, etc.) goes into
% ex_shared.tex. If there is no supplement, this file can be included
% directly.

% SIAM Shared Information Template
% This is information that is shared between the main document and any
% supplement. If no supplement is required, then this information can
% be included directly in the main document.

% Packages and macros go here
\usepackage{lipsum}
\usepackage{amsfonts}
\usepackage{graphicx}
\usepackage{epstopdf}
\usepackage{algorithmic}
\ifpdf
  \DeclareGraphicsExtensions{.eps,.pdf,.png,.jpg}
\else
  \DeclareGraphicsExtensions{.eps}
\fi

% Add a serial/Oxford comma by default.

% Used for creating new theorem and remark environments
\newsiamremark{remark}{Remark}
\newsiamremark{hypothesis}{Hypothesis}
\crefname{hypothesis}{Hypothesis}{Hypotheses}
\newsiamthm{claim}{Claim}

% Sets running headers as well as PDF title and authors
\headers{Graph Neural Reaction Diffusion Models}{M. Eliasof, E. Haber, and E. Treister}

% Title. If the supplement option is on, then "Supplementary Material"
% is automatically inserted before the title.
\title{Graph Neural Reaction Diffusion Models\thanks{Submitted to the editors June 2023. Accepted in March 2024.
\funding{The research reported in this paper was supported by grant no. 2018209 from the United States - Israel Binational Science Foundation (BSF), Jerusalem, Israel, and in part by the Israeli
Council for Higher Education (CHE) via the Data Science
Research Center, Ben-Gurion University of the Negev, Israel. ME is supported by Kreitman High-tech scholarship.%This work was funded by the Fog Research Institute under contract no.~FRI-454.
}}}

% Authors: full names plus addresses.
\author{Moshe Eliasof\thanks{University of Cambridge, United Kingdom. 
(\email{me532@cam.ac.uk}).}
\and Eldad Haber\thanks{University of British Columbia, Vancouver, Canada. 
  (\email{eldadhaber@gmail.com}).}
\and Eran Treister\thanks{Ben-Gurion University of the Negev, Beer-Sheva, Israel. \email{(erant@cs.bgu.ac.il}).}}

\usepackage{amsopn}

%% Added on Overleaf: enabling xr
\makeatletter
\newcommand*{\addFileDependency}[1]{% argument=file name and extension
  \typeout{(#1)}% latexmk will find this if $recorder=0 (however, in that case, it will ignore #1 if it is a .aux or .pdf file etc and it exists! if it doesn't exist, it will appear in the list of dependents regardless)
  \@addtofilelist{#1}% if you want it to appear in \listfiles, not really necessary and latexmk doesn't use this
  \IfFileExists{#1}{}{\typeout{No file #1.}}% latexmk will find this message if #1 doesn't exist (yet)
}
\makeatother

\newcommand*{\myexternaldocument}[1]{%
    \externaldocument{#1}%
    \addFileDependency{#1.tex}%
    \addFileDependency{#1.aux}%
}
%%% END HELPER CODE
%%% Local Variables: 
%%% mode:latex
%%% TeX-master: "ex_article"
%%% End: 

% Optional PDF information
\ifpdf
\hypersetup{
  pdftitle={Graph Neural Reaction Diffusion Models},
  pdfauthor={M. Eliasof, E. Haber, and E. Treister}
}
\fi

% The next statement enables references to information in the
% supplement. See the xr-hyperref package for details.

%% Use \myexternaldocument on Overleaf
\myexternaldocument{ex_supplement}

% FundRef data to be entered by SIAM
%<funding-group>
%<award-group>
%<funding-source>
%<named-content content-type="funder-name"> 
%</named-content> 
%<named-content content-type="funder-identifier"> 
%</named-content>
%</funding-source>
%<award-id> </award-id>
%</award-group>
%</funding-group>

%\usepackage[dvipsnames]{xcolor}

% Recommended, but optional, packages for figures and better typesetting:
\usepackage{microtype}
\usepackage{graphicx}
\usepackage{booktabs} % for professional tables
\usepackage[normalem]{ulem}
\usepackage{pgfplots}
\usepackage{subfig}
% hyperref makes hyperlinks in the resulting PDF.
% If your build breaks (sometimes temporarily if a hyperlink spans a page)
% please comment out the following usepackage line and replace
% \usepackage{icml2023} with \usepackage[nohyperref]{icml2023} above.
\usepackage{hyperref}
%%%%%%%%%%%%%%%%%%%%%%%%% our imports %%%%%%%%%%%%%%%%%%%%%%%%%
\usepackage{amsmath}

\newcommand{\std}[1]{\pm #1}
\usepackage{multirow}
\newcommand{\bfA}{{\bf A}}

\newcommand{\bfD}{{\bf D}}
\newcommand{\bfE}{{\bf E}}

\newcommand{\bfG}{{\bf G}}

\newcommand{\bfI}{{\bf I}}
\newcommand{\bfJ}{{\bf J}}
\newcommand{\bfK}{{\bf K}}
\newcommand{\bfL}{{\bf L}}

\newcommand{\bfU}{{\bf U}}
\newcommand{\bfV}{{\bf V}}

\newcommand{\bfX}{{\bf X}}
\newcommand{\bfY}{{\bf Y}}

\newcommand{\bfe}{{\bf e}}

\newcommand{\bftheta}{{\bf \theta}}

\newcommand{\bfSigma}{{\bf \Sigma}}

\usepackage{pifont}% http://ctan.org/pkg/pifont
\newcommand{\cmark}{\ding{51}}%
\newcommand{\xmark}{\ding{55}}%

% Packages and macros go here
\usepackage{tikz}
\usepackage{amsfonts}
\usepackage{graphicx}

% Attempt to make hyperref and algorithmic work together better:
%\newcommand{\theHalgorithm}%{\arabic{algorithm}}

\begin{document}

\maketitle

% REQUIRED
\begin{abstract}
The integration of Graph Neural Networks (GNNs) and Neural Ordinary and Partial Differential Equations has been extensively studied in recent years. GNN 
architectures powered by neural differential equations allow us to reason about their behavior, and develop GNNs with desired properties such as controlled smoothing or energy conservation. 
In this paper we take inspiration from Turing instabilities in a Reaction Diffusion (RD) system of partial differential equations, and propose a novel family of GNNs based on neural RD systems. We \textcolor{black}{demonstrate} that our RDGNN is powerful for the modeling of various data types, from homophilic, to heterophilic, and spatio-temporal datasets. We discuss the theoretical properties of our RDGNN, its implementation, and show that it improves or offers competitive performance to state-of-the-art methods.
\end{abstract}

% REQUIRED
\begin{keywords}
  Graph Neural Networks, Reaction Diffusion, Turing Patterns
\end{keywords}

% REQUIRED
\begin{AMS}
  68T07, 68T05
\end{AMS}

\section{Introduction}
The emergence of Graph Neural Networks (GNNs)
in recent years made a substantial impact on a wide array of fields, from computer vision and graphics \cite{monti2017geometric, wang2018dynamic,hanocka2019meshcnn} and social network analysis \cite{kipf2016semi, defferrard2016convolutional, chen20simple} to bio-informatics \cite{jumper2021}. 

Recently, GNNs have been linked to ordinary and partial differential equations (ODEs and PDEs, respectively) in a series of works \cite{ zhuang2020ordinary,chamberlain2021grand, eliasof2021pde, rusch2022graph, wang2022acmp, di2022graphGRAFF, gravina2022anti}.
These works propose to view GNN layers as the discretization of ODEs and PDEs, and offer both theoretical and practical advantages. First, such models allow to reason about the behavior of existing GNNs. For instance, as suggested by \cite{chamberlain2021grand}, it is possible to view GCN \cite{kipf2016semi} and GAT \cite{velickovic2018graph} as discretizations of the non-linear heat equation. This observation helps to analyze and understand the oversmoothing phenomenon \cite{nt2019revisiting,oono2020graph,cai2020note}. Second, ODE and PDE\textcolor{black}{-}based GNNs pave the path to the construction and design of novel GNNs that satisfy desired properties, such as energy preservation \cite{eliasof2021pde, rusch2022graph}, attraction and repulsion forces modeling \cite{wang2022acmp, di2022graphGRAFF}, and anti-symmetry \cite{gravina2022anti}.

\begin{figure}[]
    \centering
     \includegraphics[  width=1\textwidth]{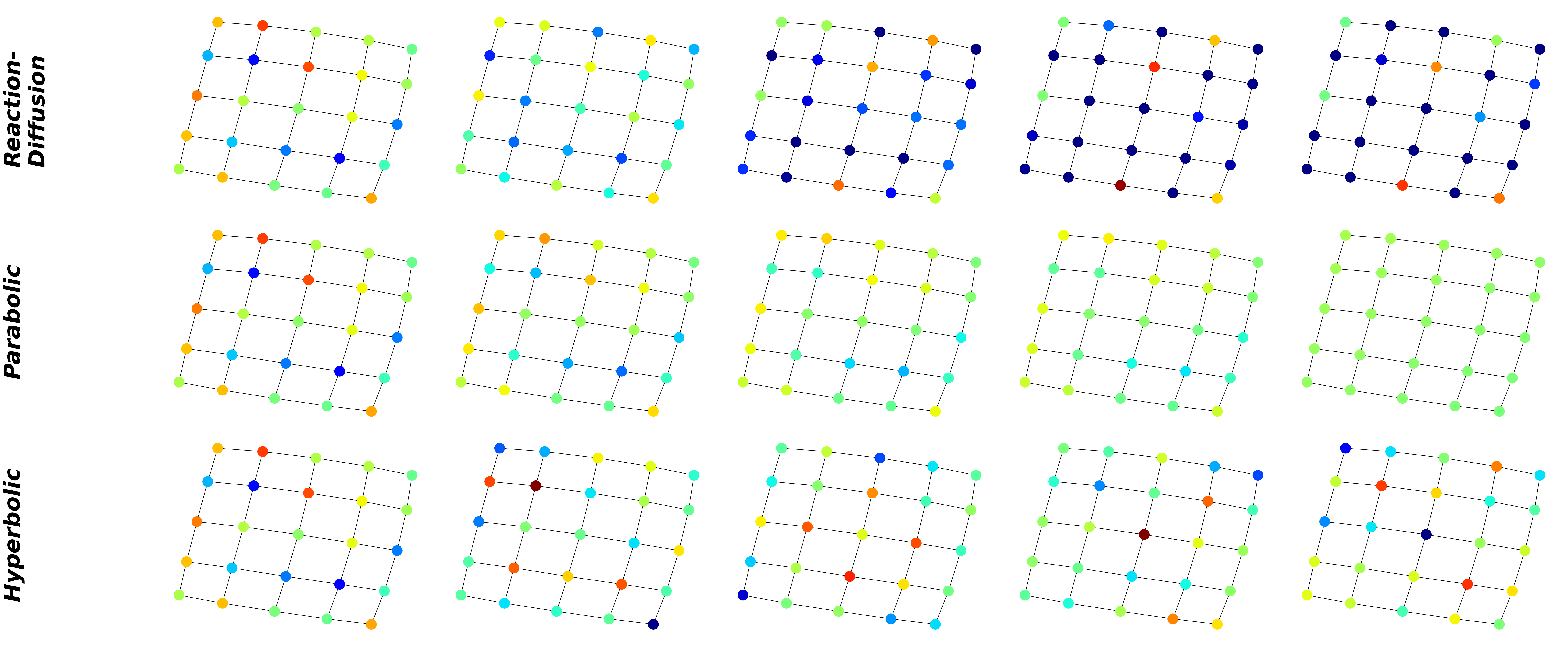}
    \caption{The layer dynamics (portrayed along the x-axis) induced by our RDGNN vs. existing parabolic and hyperbolic dynamics.}
    \label{fig:dynamics}
\end{figure}

In this work, we draw inspiration from instabilities in a reaction diffusion (RD) system \cite{Turing52}, to establish and demonstrate a new class of PDE\textcolor{black}{-}based GNNs that stem from a learnable graph neural RD system. Turing instability within a RD system is one of the fundamental observations in mathematical physics, and it has been used to model different non-linear
phenomena, from the formation of patterns \cite{gierer1972theory}, cell communication  \cite{ratz2012turing}, population dynamics \cite{zincenko2021turing}, to the generation of waves in the cardiac electric system \cite{panfilov2019reaction}, and more \cite{satnoianu2000turing, kondo2010reaction}. Such RD systems can be utilized to explain and model patterns in data, as well as the development of such patterns in time.

\textbf{Contribution.} Building on the seminal work of \cite{Turing52}, we propose RDGNN, a new family of GNN architectures that arise from the discretization of the RD equation on a graph. 
The dedicated construction of our RDGNN and its behavior have different properties compared with existing PDE- and ODE-based GNNs. While parabolic (diffusive) GNNs as in \cite{chamberlain2021grand, eliasof2021pde} lead to piece-wise smooth features, and hyperbolic GNNs as in \cite{eliasof2021pde, rusch2022graph} conserve the feature energy, both dynamics are \emph{globally stable} and therefore do not naturally (i.e., in the continuous domain) generate new feature patterns. \textcolor{black}{Motivated by the profound understanding on classical reaction diffusion systems \cite{Turing52, cottet1993image,kondo2010reaction,chen2017trainable}, and their ability to generate non-smooth patterns that cannot be captured with simple diffusion dynamics as in existing GNNs, we propose to study}  learnable RD system\textcolor{black}{s} \textcolor{black}{that we call} RDGNN, \textcolor{black}{such that} non-smooth patterns using local instabilities \textcolor{black}{can be generated.} 
In \cref{fig:dynamics}, we give examples of the behavior of the RD, parabolic, and hyperbolic dynamics on a graph to illustrate the differences discussed above.
Therefore, we deem that our RDGNN offers greater flexibility in modeling both smooth and non-smooth patterns in data. 
In practice, we show in \cref{sec:experiments} that this qualitative difference translates to improved accuracy on heterophilic node classification datasets, where non-smooth patterns typically appear. Furthermore, in \cref{sec:ablation}, we show that another benefit of our RDGNN is that it does not suffer from oversmoothing in practice.
 Additionally, we show that our RDGNN is effective in modeling and predicting the time-dependent behavior of spatio-temporal systems that reside on graphs in \cref{sec:temporalExperiments}. 

The rest of the paper is structured as follows. In  \cref{sec:related} we discuss related works. In  \cref{sec:method} we describe the continuous RD system, and derive its discretization that operates on graphs. 
We also analyze our RDGNN and demonstrate how instabilities are developed in the process of solving the underlying RD equation on a graph.  

In \cref{sec:experiments} we carry \textcolor{black}{out} numerous experiments on node classification, and spatio-temporal node regression datasets, followed by an ablation study, and 
conclusions in \cref{sec:conclusion}. 

\section{Related Work}
\label{sec:related}

\paragraph{Graph Neural Networks as Diffusion Processes} The seminal work of \cite{scarselli2008graph}, and later additions by \cite{bruna2013spectral, defferrard2016convolutional, kipf2016semi} laid the foundation for current GNNs. 
The common theme of most of these works is their reliance on diffusion processes \cite{nt2019revisiting}. Due to the observed oversmoothing phenomenon \cite{oono2020graph,measuringoversmoothing, cai2020note}, the view and analysis of GNNs as diffusion processes have gained popularity in recent years. Methods like APPNP \cite{klicpera2018combining} propose to use a personalized PageRank \cite{page1999pagerank} algorithm to determine the diffusion of node features, and GDC \cite{gasteiger_diffusion_2019} imposes constraints on the ChebNet \cite{defferrard2016convolutional} architecture to obtain diffusion kernels, showing accuracy improvement. Recently, it was shown in several works that it is also possible to \textcolor{black}{learn} diffusion term from the data \cite{di2022graphGRAFF, eliasof2023improving, chien2021adaptive, luan2022revisiting}, by learning appropriate diffusion coefficients.

\paragraph{Neural ODEs and PDEs}
\label{sec:neuralODEPDE_background}
The interpretation of convolutional neural networks (CNNs) as discretizations of ODEs and PDEs was presented in \cite{RuthottoHaber2018, chen2018deep, zhang2019linked}.   
In the field of GNNs, works like GRAND \cite{chamberlain2021grand}, PDE-GCN\textsubscript{D} \cite{eliasof2021pde} and GRAND++ \cite{thorpe2022grand} propose to view GNN layers as time steps in the integration of the non-linear heat equation, allowing to control the diffusion (smoothing) in the network, to understand oversmoothing. Other works like PDE-GCN\textsubscript{M} \cite{eliasof2021pde} and GraphCON \cite{rusch2022graph} propose to mix diffusion and oscillatory processes to avoid oversmoothing by feature preservation, and A-DGCN \cite{gravina2022anti} proposes an anti-symmetric ODE\textcolor{black}{-}based GNN to alleviate the over-squashing \cite{alon2021oversquashing} phenomenon.

On another note, CNNs and GNNs are also used to solve and accelerate PDE solvers \cite{raissi2018deep, long2018pdenet, li2020multipole, brandstetter2022message}, as well as to generate \cite{sanchez2019hamiltonian} and solve \cite{belbute_peres_cfdgcn_2020} physical simulations, including the solution of the RD equations \cite{li2020reaction}. In this paper, we focus on the view of GNNs as the discretization of PDEs, rather than using GNNs to solve PDEs. 

\paragraph{Reaction Diffusion}
\label{sec:diffusionReactionSystems}
A RD system is a mathematical model that describes the simultaneous spread of a substance or attributes through a medium, and a chemical or biological \textcolor{black}{interactions} taking place within that medium. These systems are used to study a wide range of phenomena, including the spread of chemical and biological agents, the movement of heat and mass in materials, and the flow of information and communication through networks. Recently, the work of \cite{chen2021learning, dodhia2021machine} has been used in the context of Physics informed neural networks to learn the reaction term. This work is
different than ours in the basic assumptions and settings, mainly by the problem domain (continuum vs. a graph) and the training
process. Furthermore, the RD equations have also been applied to various imaging tasks, e.g., \cite{cottet1993image,chen2017trainable, TreisterEtAl2018, haber2019imexnet}.

While applying diffusion on graphs is a well-established concept, RD systems have been less common in GNN literature. In the context of graph RD, works that attempt
to train \emph{parametric} models on surfaces such as the Allen Cahn model have been
proposed in \cite{turk1991generating, bachini2021intrinsic}.
However, these models do not allow for learning a non-parametric (that is, universal) reaction term. Indeed, all works known to us assume a parametric reaction term with a few parameters that need to be determined, typically by trial and error. This makes such models very specific to a particular application, depending on a domain expert to propose an appropriate reaction term. 
Recently, a non-trainable reaction term was  proposed by \cite{wang2022acmp}, demonstrating the benefit of a simple reaction term in GNNs. In this paper we relieve this constraint and propose several trainable reaction schemes while  learning the spatial diffusion coefficients. Closest to ours is the concurrent work of GREAD \cite{choi2022gread} that also learns a graph neural reaction diffusion system. The main differences are in the parameterization of our reaction function (fixed template in GREAD vs. learnable in ours), their integration (by NODE \cite{neuroODE} in GREAD, or by the implicit-explicit approach in our RDGNN), and our experimental evaluation on both static and spatio-temporal datasets.

\section{Neural Graph Reaction Diffusion}
\label{sec:method}
We describe the general outline of a continuous RD system in \cref{sec:RD_cont_systems}. Following that, we present its spatially discrete version in \cref{sec:RD_on_graphs}. We then discuss two time discretization schemes in  \cref{sec:timeDiscretization}, and analyze the behavior of our RDGNN in \cref{sec:stabilityAnalysis}. In \cref{sec:ReactionAndDiffusion_Funcs}, we discuss the reaction and diffusion functions in our RDGNN.

\subsection{Reaction Diffusion Systems} \label{sec:RD_cont_systems}

Reaction diffusion systems allow the modeling of complex phenomena by generating local Turing instabilities.

The RD system of equations has the form of
\begin{subequations}
\label{eq:continuousDiffusionReaction}
\begin{eqnarray}
\label{eq:continuousDiffusionReactiona}
 {\frac {\partial U}{\partial t}} =  \bfSigma{\Delta} {U} + f\left( U, x, \theta 
\right), \ x \in \Omega, \ t \in [0,T]  \\
\label{eq:continuousDiffusionReactionb}
 {U}(0, x) =  U_0( x)
\end{eqnarray}
\end{subequations}

accompanied by some boundary conditions.
Here, $U(t, x) = [u_1(t,  x), \ldots, u_c(t,  x)] : \mathbb{R}^{\textcolor{black}{[0,T] \times \Omega}} \rightarrow \mathbb{R}^{c}$ is a vector of functions where $u_s(t,x) , \ s = 1,\ldots,c$ is a specie (channel), $ \Delta$ is the continuous Laplacian operator applied to
each specie independently. The matrix $\bfSigma = {\rm diag}(\kappa_1, \ldots, \kappa_c) \in \mathbb{R}^{c\times c},  \ \ \kappa_i\ge 0$  is a diagonal matrix of diffusion coefficients, each applied to its corresponding  specie.  $f(U, x, \theta)$ is a reaction function that mixes the different species with some parametric form given parameters $\theta$. The spatial domain $\Omega$ can simply be  $\mathbb{R}^{d}$ or a manifold $\mathcal{M} \subseteq \mathbb{R}^{d}$.

While the range of phenomena that is modeled using Equation \eqref{eq:continuousDiffusionReaction} is large, the difference between these phenomena is governed by two quantities -- the diffusion
coefficients $\bfSigma$, and the non-linear reaction function $f$. Typically, the
function $f(U, x, \theta)$ and the diffusion coefficients $\bfSigma$ are chosen ad-hoc by trial and error \cite{murray2}.

\subsection{Reaction Diffusion on Graphs}
\label{sec:RD_on_graphs}

The RD system in Equation \eqref{eq:continuousDiffusionReaction} is defined in the continuum. We now derive its spatially discrete counterpart to operate on graphs. 
To this end, we assume to have data that resides on a graph $\mathcal{G} = (\mathcal{V}, \mathcal{E})$, where the set of nodes ${\cal V} = \{v_0 \ldots v_{\textcolor{black}{n-1}}\}$ are connected with a set of edges ${\cal E} \subseteq \cal V \times \cal V$ that describe the graph connectivity. 
We also assume that each node $v_i$ is accompanied with a data pair, $\{\bfX_i, \bfY_i\}$, where $\bfX_i \in \mathbb{R}^{c_{in}}$ are the input node features, and $\bfY_i \in \mathbb{R}^{c_{out}}$ is the ground-truth data. Let us denote the adjacency matrix $\bfA \in \mathbb{R}^{n \times n}$, where $\bfA_{ij} = 1$ if there exists an edge $(i,j) \in {\cal E}$ and 0 otherwise. We also define the diagonal degree matrix $\bfD \in \mathbb{R}^{n \times n}$, where $\bfD_{ii}\textcolor{black}{=d_i}$ is the degree of the $i$-th node. The graph Laplacian is then given by $\bfL=\bfD-\bfA$. 
Let us also denote the adjacency and degree matrices with added self-loops by $\tilde \bfA$ and $\tilde \bfD$, respectively, and the corresponding graph Laplacian by $\tilde{\bfL} = \tilde{\bfD} - \tilde{\bfA}$.

To define the spatially discrete counterpart of Equation \eqref{eq:continuousDiffusionReaction}, we replace the continuous Laplacian operator $\Delta$ with the discrete symmetric normalized
graph Laplacian 
\begin{equation}\label{eq:hatL}
\hat{\bfL} = \tilde{\bfD}^{-\frac{1}{2}}\tilde{\bfL}\tilde{\bfD}^{-\frac{1}{2}},
\end{equation}
that operates similarly to the continuous Laplacian on data that reside on a graph. Note that due to this discretization, in what follows, $\bfSigma$ will be multiplied from the right.

The spatially discrete analog to the RD
system from Equation \eqref{eq:continuousDiffusionReaction} is therefore given by
\begin{subequations}
\label{eq:discDR}
\begin{eqnarray}
\label{eq:discDRa}
{\frac {d\bfU(t)}{d t}} &=&  -\hat\bfL \bfU(t) \bfSigma(t) + f(\bfU(t), \bfX, t; \bftheta(t)) \\
\label{eq:discDRb}
\bfU(0) &=& \bfU_0 = f_0(\bfX\textcolor{black}{; } \bftheta_0) 
\end{eqnarray}
\end{subequations}
Here, $\bfU(t)\in\mathbb{R}^{n \times c}$ is a hidden state that depends on the initial state $\bfU_0$, and the input node features $\bfX\in \mathbb{R}^{n \times c_{in}}$. \textcolor{black}{The initial state is obtained by embedding the input node features using a single layer MLP with learnable weights $\theta_{0}$ denoted by $f_{0}$, as follows: $\bfU_{0} = f_{0}(\bfX; \theta_{0})$. Also,}
$\bfSigma(t) \textcolor{black}{\geq} 0$ is a diagonal matrix with non-negative learnable diffusion coefficients on its diagonal. The reaction term  $f$ from Equation \eqref{eq:discDRa} is a learnable point-wise non-linear function that depends on the input node features 
$\bfX$ and the current hidden state $\bfU(t)$. 
Note that in classical PDE theory, and in particular in the definition of a RD system, the Laplacian is a negative operator, while in graph theory it is positive. We therefore added the negative sign to $\hat\bfL$ in Equation \eqref{eq:discDRa} compared to Equation \eqref{eq:continuousDiffusionReactiona}.

Our model is more general than the standard RD system in Equation \eqref{eq:continuousDiffusionReaction}, because it allows
both the reaction $f$ and the diffusion coefficients $\bfSigma$ \textcolor{black}{to be} time-dependent. We add this flexibility for scenarios when  diffusion policies or the reaction can be time-varying, 
for example, in \textcolor{black}{periodic phenomena that depend} on the month of the year such as the spread of disease\textcolor{black}{s}.
We note that typically, $c$, the number of channels of $\bfU(T) \in \mathbb{R}^{n \times c}$, is different than $c_{out}$, the number of channels of the target output $\bfY \in \mathbb{R}^{n \times c_{out}}$. Therefore the output of \textcolor{black}{the} neural network, $\hat{\bfY}$, is given by 
\begin{eqnarray}
\label{yFromu}
\hat{\bfY} = g(\bfU(T)\textcolor{black}{; } \bftheta_{\rm out}) \in \mathbb{R}^{n \times c_{out}}
\end{eqnarray}
where  $g$ is a simple classifier (a single $1\times1 $ convolution), parameterized by learnable weights $\bftheta_{\rm out}$.

\subsection{Time Integration}
\label{sec:timeDiscretization}
While the RD system in Equation \eqref{eq:discDRa} is discretized in space, it is still continuous in time. In this paper we interpret the time steps as neural network layers, and, in what follows, we replace the time notation $t$ with layer index $l$, with a positive time step size $h$, such that $t=h l$.
We now discuss two popular time discretization schemes.
\paragraph{Explicit Integration}
While   explicit forward Euler discretization is very restrictive due to \textcolor{black}{numerical} stability,
it is still one of the most used for deep neural networks and is known as ResNet \cite{he2016deep}, which we write for completeness as:
\begin{equation}
    \label{eq:forwardEulerDiscretization}
    \bfU_{l+1} = \bfU_l - h \left(\hat\bfL \bfU_{l} \bfSigma_{l} - f(\bfU_l, t_l; \bftheta_l)\right). 
\end{equation}
However, such an explicit scheme requires using a small step size $h>0$, or equivalently, many layers, as it is \textcolor{black}{numerically} marginally stable \cite{ascherBook}. 

\paragraph{Implicit-Explicit (IMEX) Integration}
To lift the restrictions of the forward Euler method, we harness the IMEX scheme
that is commonly used for RD equations, and allows us to consider a larger $h$ step size \cite{ruuthphd, coats2000note}. 

The IMEX integration scheme for our RDGNN reads 
\begin{eqnarray}
\label{imexdisc}
{\frac {\bfU_{l+1} - \bfU_l}{h}} =  -\hat\bfL \bfU_{l+1} \bfSigma_{\textcolor{black}{l}} + f(\bfU_l,  t_l; \bftheta_l), 
\end{eqnarray}
leading to a discretized ODE, solved as follows:
\begin{subequations}
\label{eq:drSolutionWithImex}
\begin{eqnarray}
\label{eq:dr2}
&& \bfV_l = \bfU_l + h f(\bfU_l,  t_l; \bftheta_l)   \\
\label{eq:dr3}
&& \bfU_{l+1} = {\rm mat} ((\bfI + h  \bfSigma_{l} \otimes \hat\bfL)^{-1} {\rm vec} ({\bfV}_l))
\end{eqnarray}
\end{subequations}
where $l=1, \ldots, L$. 

The ${\rm vec}(\cdot)$ operation  flattens the input tensor, and ${\rm mat}(\cdot)$ is the inverse reshape of the $\rm vec(\cdot)$ operation. $\otimes$ is the Kronecker product.
Note, that the IMEX iteration of the RD equations has two main steps. Step \eqref{eq:dr2} performs explicit integration of the reaction function $f$, resulting in a skip-connection term.
This step can be thought of as a ResNet \cite{he2016deep} layer that is stepped forward on each node in $\mathcal{V}$ independently.  
Next, step \eqref{eq:dr3} describes the implicit step that
requires the solution of a linear system.
The step propagates the information between the graph nodes $\mathcal{V}$.
Note that the term $(\bfI + h  \bfSigma_{l} \otimes \hat\bfL)^{-1}$ is dense, and it therefore allows to potentially propagate and share information between each node-pair $(v_i, v_j) \in \mathcal{V} \times \mathcal{V}$ in the graph. Also, note that because $\bfSigma_l$ is a non-negative diagonal matrix and $\hat\bfL$ is symmetric positive semi-definite, then $\bfI + h  \bfSigma_{l} \otimes \hat\bfL$ is symmetric positive definite, and invertible. 

To invert the matrix in step \eqref{eq:dr3} one may use a direct or an iterative method. 
Direct methods like the Cholesky factorization are suitable for small graphs (up to a few thousand nodes). 
However, here we use the Conjugate Gradient (CG) iterative method \cite{ascherBook}, as we typically observe fast convergence (5-10 iterations) for a relative error that is smaller than $10^{-2}$. We refer the reader to  \cref{app:complexity} for a discussion about backpropagation through the implicit step.

The use of IMEX methods was previously utilized in CNNs \cite{haber2019imexnet} and GNNs \cite{chamberlain2021grand}. To validate the advantages of IMEX over a forward Euler integration in our RDGNN, we provide experimental results on several datasets that favor the IMEX integration scheme, in our ablation study in Section \ref{sec:ablation}.

\subsection{Local Stability Analysis}
\label{sec:stabilityAnalysis}

At the core of RD systems lie the local instabilities, for a short time, that promote pattern growth. To demonstrate how such instabilities can be achieved in our RDGNN, we state the following theorem.

\begin{theorem}
\textcolor{black}{Consider} the discrete RD equation defined by Equation \eqref{eq:discDRa}, where $\hat\bfL$ is the symmetrically normalized graph Laplacian, $\bfSigma = {\rm diag}(\kappa_1, \ldots, \kappa_c)$ with $\kappa_j\textcolor{black}{\geq}0$ are the diffusion coefficients, and $f(\bfU,\theta)$ is a non-linear reaction function. If the matrix 
\begin{equation}\label{eq:G}
\bfG = -\bfSigma \otimes \hat\bfL  + \bfJ
\end{equation} where $\bfJ=\nabla_\bfU f$, contains positive eigenvalues, then the RD system \eqref{eq:discDRa} is unstable for the eigenvectors that are associated with those eigenvalues.
\end{theorem}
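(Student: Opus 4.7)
The plan is to perform a standard linear stability analysis of the spatially discrete RD system by vectorizing the matrix ODE, linearizing around a reference state, and reading off instability from the eigen-decomposition of the resulting constant-coefficient linear system. Time dependence of $\bfSigma$ and $\bftheta$ plays no role in the local analysis, so I would treat them as frozen at the instant of perturbation.

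First I would rewrite Equation \eqref{eq:discDRa} in vector form. Using the Kronecker identity $\mathrm{vec}(\hat\bfL\bfU\bfSigma)=(\bfSigma^\top\otimes\hat\bfL)\,\mathrm{vec}(\bfU)=(\bfSigma\otimes\hat\bfL)\,\mathrm{vec}(\bfU)$ (valid since $\bfSigma$ is diagonal, hence symmetric), applying $\mathrm{vec}(\cdot)$ to both sides of \eqref{eq:discDRa} yields
\begin{equation}
\frac{d}{dt}\,\mathrm{vec}(\bfU)= -(\bfSigma\otimes\hat\bfL)\,\mathrm{vec}(\bfU)+\mathrm{vec}\bigl(f(\bfU,\bfX,t;\bftheta)\bigr).
\end{equation}
Next I would introduce a perturbation $\bfU(t)=\bfU^{\ast}+\delta\bfU(t)$ about a reference state $\bfU^{\ast}$, set $\bfu=\mathrm{vec}(\bfU)$ and $\bfu^{\ast}=\mathrm{vec}(\bfU^{\ast})$, and Taylor-expand $f$ to first order in $\delta\bfu$. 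Writing $\bfJ=\nabla_{\bfU}f$ as the Jacobian of the vectorized reaction evaluated at $\bfU^{\ast}$, the subtraction kills the zeroth-order terms and leaves, to leading order,
\begin{equation}
\frac{d}{dt}\,\delta\bfu= \bigl(-\bfSigma\otimes\hat\bfL+\bfJ\bigr)\,\delta\bfu=\bfG\,\delta\bfu,
\end{equation}
which is the constant-coefficient linear ODE whose flow is $\delta\bfu(t)=e^{\bfG t}\,\delta\bfu(0)$.

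Finally I would invoke the elementary fact that, for any eigenpair $(\lambda,\bfv)$ of $\bfG$, the component $\alpha(t)=\bfv^{\ast\top}\delta\bfu(t)$ along $\bfv$ evolves as $\alpha(t)=e^{\lambda t}\alpha(0)$. If $\lambda>0$, then $|\alpha(t)|\to\infty$ exponentially whenever $\alpha(0)\neq 0$, which is precisely instability in the direction $\bfv$. Since by hypothesis $\bfG$ has at least one positive eigenvalue, the system is unstable along the corresponding eigenvector, giving the claim.

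The main obstacle, such as it is, is notational rather than conceptual: being careful that the Kronecker ordering matches the convention used in \eqref{eq:dr3}, that $\bfJ$ is the full Jacobian of the \emph{vectorized} reaction (not a channel-wise one) so that the dimensions of the two summands in $\bfG$ agree as $nc\times nc$ matrices, and noting that because $\bfSigma$ may be indefinite in general we cannot a priori exclude positive eigenvalues of $\bfG$ even when $f\equiv 0$ is absent, so the reaction Jacobian $\bfJ$ is the mechanism by which the diffusive damping $-\bfSigma\otimes\hat\bfL$ can be overcome and Turing-type instabilities can emerge. I would also mention that the proof does not assume $\bfU^{\ast}$ is an equilibrium; it only characterizes which directions are locally amplified by the linearized flow at the current state, which is exactly the notion of instability relevant for pattern formation during the layer-wise dynamics.
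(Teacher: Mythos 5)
Your proof is correct and follows essentially the same route as the paper: vectorize the system via the Kronecker identity, linearize the reaction term around a reference trajectory to obtain $\tfrac{d}{dt}\,\mathrm{vec}(\delta\bfU)=\bfG\,\mathrm{vec}(\delta\bfU)$ with $\bfG=-\bfSigma\otimes\hat\bfL+\bfJ$, and read off exponential growth along eigenvectors with positive eigenvalues. One small slip in your closing remarks: the theorem assumes $\kappa_j\geq 0$, so $-\bfSigma\otimes\hat\bfL$ is negative semi-definite and cannot by itself produce positive eigenvalues of $\bfG$ --- the reaction Jacobian $\bfJ$ is the only source of instability, exactly as the paper's remark emphasizes.
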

\begin{proof}
 Because Equation \eqref{eq:discDRa} is non-linear, we turn to understand its qualitative behavior through a linear
analysis, a common tool for dynamical systems (see \cite{EvansPDE, Turing52, turing1990chemical} and references within).

Examining the properties of Equation~\eqref{eq:discDR} around some point $\bfU_0$ that satisfies the discrete ODE from Equation \eqref{eq:discDR}, letting $\bfU(t) = \bfU_0(t) + \delta \bfU(t)$ we obtain
\begin{equation}
\label{eq:analysisLinearization2}
\frac{d(\bfU_0 + \delta \bfU)}{d t} = -\hat\bfL (\bfU_0 + \delta \bfU) \bfSigma + f(\bfU_0 + \delta \bfU, \textcolor{black}{\bfX}, \theta)
\end{equation}
To study the behavior of the equation, let us rewrite the equation in a vectorized form 
\begin{equation}
\small
\label{eq:analysisLinearization3}
\frac{d({\rm vec}(\bfU_0 + \delta\bfU))}{d t} = -(\bfSigma\otimes \hat\bfL)
{\rm vec}(\bfU_0 +  \delta\bfU) + f({\rm vec}(\bfU_0 +  \delta\bfU), \bfX, \theta)
 \end{equation}
where  $\otimes$ is the Kronecker product, hence the matrix $\bfSigma \otimes \hat\bfL \in \mathbb{R}^{nc \times nc}$.  
Using a first-order linearization for the reaction term $f$ we obtain:
\begin{equation}
    \label{eq:reactionLinearization}
    f({\rm vec}(\bfU_0 + \delta \bfU), \bfX, \theta) \approx
f({\rm vec}(\bfU_0), \bfX, \theta)  +  \bfJ  {\rm vec}( \delta\bfU).
\end{equation}

where $\bfJ$ is Jacobian of $f$ with respect to ${\rm vec} (\bfU)$, i.e., $\bfJ = \nabla_{\bfU} f$.

Altogether, we obtain a local linear ODE with respect to $\delta \bfU$
\begin{equation}
    \label{eq:deltaU_ode}
    {\frac{d {\rm vec}(\delta \bfU)}{dt}} =    
-\bfSigma\otimes \hat\bfL  {\rm vec}(\delta\bfU)  +  \bfJ {\rm vec}(\delta \bfU) = \bfG {\rm vec}(\delta \bfU)
\end{equation}
where $\bfG$ is the matrix defined in Equation \eqref{eq:G}. This ODE describes the local behavior of our RDGNN model from Equation \eqref{eq:discDR}. Therefore, we obtain that the matrix of interest is $\bfG$ in Equation \eqref{eq:G}, and if this matrix has positive eigenvalues, the associated eigenvectors will induce instability.
\end{proof}

\begin{remark}
Note that the term $-\bfSigma \otimes \hat\bfL$ is always negative semi-definite as each of its eigenvalues is minus the product between an eigenvalue of $\bfSigma$ (positive) and an eigenvalue of $\hat\bfL$ (non-negative Laplacian). Therefore, this term in \cref{eq:G} alone induces stability due to its monotone nature \cite{ap}. Nonetheless, there are two possible instabilities that can be observed or developed,   due to the addition of $\bfJ$, the Jacobian of the reaction function $f$.
\end{remark}
First, if the real part of the eigenvalues of $\bfJ$ are positive, it is possible that the matrix $\bfG$ will have eigenvalues with
positive real parts that induce local instability. Second, 
an interesting observation is that even if
both $-\bfSigma \otimes \hat\bfL$ and $\bfJ$ have  \textcolor{black}{negative real part} eigenvalues, and each is stable and monotone on its own---adding the Laplacian eigenvalues weighted by the diffusion coefficients $\bfSigma$, to $\bfJ$, \emph{can} result in a matrix with some real positive eigenvalues, $\bfG$ from \cref{eq:G}, that describes an unstable process. 
In such a case, the Turing instabilities occur \cite{satnoianu2000turing}. 

We provide a simple $2\times2$ example of such a case. Consider the matrices
$$ 
A = \begin{bmatrix}
   -1.0 &   0.1 \\
    -1.0  &  0.0
\end{bmatrix}, \quad B = \begin{bmatrix}
   -1.0 &   -1.0 \\
    0.1  &  0.0
\end{bmatrix}.
$$

It is easy to verify that both matrices have  negative eigenvalues. However, their sum $A + B$ yields one positive and one negative \textcolor{black}{eigenvalue}. This positive eigenvalue implies that the eigenvector that is associated with this eigenvalue will become unstable, which is desired in the context of pattern formation using RD systems.

We refer the interested reader to \cite{satnoianu2000turing} for further analysis of the generation of such instabilities. It is important to note, that these systems are typically analyzed for fixed RD systems, while here, we allow our RDGNN to learn the reaction and diffusion parameters from the data in a task-driven fashion. Consequently, either instability of the first kind (positive real part of the eigenvalues of $\bfJ$) or Turing instability can be generated if required by the data and task at hand. 
Generating such instabilities requires a combination of the diffusion coefficients $\bfSigma$ and the non-linear function $f$ and therefore, in our model, we assume both can be learned from the data, which we also discuss later in Section \ref{sec:method}.

\textcolor{black}{\textbf{Mitigation of oversmooting in RDGNNs.} In this section we theoretically show how the RDGNN is able to mitigate oversmoothing. For simplicity, we consider a single-channel version of RDGNN with a linear reaction term and step size $h=1$. In this case, an RDGNN layer reads: 
\begin{equation} \label{eq:1DRDGNN}
\bfU^{(l+1)} = \bfU^{(l)} - \kappa\tilde\bfL\bfU^{(l)} + k_{U}\bfU^{(l)},
\end{equation}
where $\kappa\geq 0$ is a learnable scalar that is a diffusion coefficient, and is the counterpart of the multi-channel RDGNN diffusivity coefficients $\bfSigma$. Also, $k_U$ is a learnable weight representing the $1\times 1$ convolution in one dimension, which is the counterpart of the multi-channel terms in the reaction function $f$.}

\textcolor{black}{The typical oversmoothing analyses of GCN \cite{nt2019revisiting, cai2020note}, can be realized when setting $\kappa = 1$ and $k_{U} = 0$. These settings yield the following node feature update rule:
\begin{equation}
\label{eq:gradFlow}
    \bfU^{(l+1)} = \bfU^{(l)} - \tilde\bfL\bfU^{(l)}.
\end{equation}
As has been shown in \cite{di2022graphGRAFF}, Equation \eqref{eq:gradFlow} can be interpreted as the gradient flow of the Dirichlet energy:
\begin{equation}\label{eq:Dirichlet}
E{(\bfU^{(l)})} = \sum_{i \in \mathcal{V}} \sum_{(i,j) \in \mathcal{E}} \frac{1}{2} \textstyle{\left\|\frac{\bfU^{(l)}_i}{\sqrt{(1+d_i)}} - \frac{\bfU^{(l)}_j}{\sqrt{(1+d_j)}} \right\|_2^2} = ||\nabla \bfD^{1/2} \bfU^{(l)}||_2^2.
\end{equation}
The Dirichlet energy is a common metric to measure smoothness in GNNs \cite{nt2019revisiting,cai2020note,rusch2022graph}, and its decrease to zero as we add more layers corresponds to the oversmoothing issue in GNNs. Algebraically, the update rule presented in Equation \eqref{eq:gradFlow} can be described by the application of the operator $\bfI-\tilde{\bfL}$ to the current node features $\bfU^{(l)}$. Note, that the eigenvectors corresponding to the largest eigenvalues of the operator $\bfI - \tilde\bfL$ are spatially smooth. Therefore, repeated applications of this matrix will converge to smooth feature maps, and applying too many such layers will eventually oversmooth. 
In \cite{di2022graphGRAFF, eliasof2023improving}, it is shown that including a learnable diffusion weight $\kappa$ leads to the algebraic operator $\bfI-\kappa\tilde{\bfL}$, and can control the smoothness and prevent oversmooting. Here, this option is also viable due to the learning of $\kappa$. Additionally, in RDGNN the reaction term can also aid in preventing oversmoothing, by shifting the eigenvalues of the effective algebraic operator in \eqref{eq:1DRDGNN}, that reads $\bfI-\kappa\tilde\bfL + k_U\bfI$. For example, placing $k_U = -1$, will result in 
\begin{equation}
\bfU^{(l+1)} = - \kappa\tilde\bfL\bfU^{(l)},    
\end{equation}
for which the eigenvector with the largest eigenvalue in magnitude corresponds to a high-frequency feature map. Furthermore, placing $k_U = -2$ will result in a gradient ascent iteration (up to a minus sign) on the Dirichlet energy term in \eqref{eq:Dirichlet}. Thus, in those two possible realizations offered by our RDGNNs, the Dirichlet energy will increase and the corresponding node features will have amplified high-frequency components compared to the input $\bfU^{(l)}$.}

\textcolor{black}{To summarize, the RDGNN architecture has the flexibility to \emph{enlarge and reduce} the Dirichlet energy, and the weights for controlling the energy are determined using the learning problem on each given dataset and task.}

\subsection{The Reaction and Diffusion Functions}
\label{sec:ReactionAndDiffusion_Funcs}

\paragraph{The Reaction Function}
Our RDGNN employs two types of neural reaction layers, which we now discuss. For brevity, we omit the layer notation $l$, and discuss per-layer parameterization or cross-layer shared parameters in \cref{sec:experiments}.

The first reaction term is an additive multi-layer perceptron (MLP) that considers both the initial node embedding $\bfU_{0}$ and the current node features $\bfU$.
\begin{equation}
\small
    \label{eq:simpleAdditiveReaction}
        f_{\rm add}(\bfU, \bfX, t) = \sigma \left(\bfU \bfK_{U}  + \bfU_0(\bfX)\bfK_{U_0}  +  \phi(t \textcolor{black}{\mathbf{1}{\bfe_{f}}^\top}) \bfK_t  \right), 
\end{equation}

where 
$\sigma$ is a non-linear activation function (ReLU in our experiments), and $\bfU_0(\bfX)$ is defined by \cref{eq:discDRb}. $\bfK_{U}$, $\bfK_{U_{0}}$, and $\bfK_t$  are $c\times c$ learnable weights of a $1 \times 1$ convolution.
The learned time embedding vector $\bfe_{f} \in \mathbb{R}^{c \times \textcolor{black}{1}}$ is multiplied by $t$ and is placed in a periodic, \textcolor{black}{element-wise} sine activation
function $\phi(\cdot)$. $\mathbf{1} \in \mathbb{R}^{n  \textcolor{black} {\times 1}}$ is a vector \textcolor{black}{whose entries are ones. The multiplication $\mathbf{1}{\bfe_{f}}^{\top}$} repeats the time embedding to all the nodes in $\mathcal{V}$ \textcolor{black}{resulting in a matrix $n \times c$}. The role \textcolor{black}{of} the time embedding
is to expose possible periodicity in the data, that often appears in temporal datasets. 

While additive layers work well in our experiments, they often fail to model multiplicative functions \cite{Jayakumar2020Multiplicative} that may appear in RD systems. For instance, the SIR \cite{weiss2013sir} model of the spread of infectious diseases and predator-prey behavior contains such reaction functions. Thus, we also consider a multiplicative reaction function of the form
\begin{equation}
\label{eq:multiplicativeLayer}
f_{\rm mul}(\bfU, \bfX, t) = f_{\rm add_1}(\bfU, \bfX, t)
\odot 
f_{\rm add_2}(\bfU, \bfX, t),
\end{equation}
where $\odot$ is the element-wise Hadamard product. 

To allow our RDGNN to express both additive and multiplicative reactions, we define our reaction function as:
\begin{equation}
    \label{eq:totalReaction}
    f(\bfU, \bfX, t) = f_{\rm{add_{1}}}(\bfU, \bfX, t)  + f_{\rm{mul}}(\bfU, \bfX, t). 
\end{equation}
\textcolor{black}{Note, that the weights of the term $f_{\rm{add_{1}}}$ are shared between the additive and multiplicative terms.}

\paragraph{The Diffusion Term} In this work we use the symmetric normalized Laplacian matrix, defined in Equation \eqref{eq:hatL}, multiplied by the diffusion coefficients $\bfSigma$. Other diffusion terms, such as a learnable diffusion operator using a symmetrized graph attention operator \cite{velickovic2018graph}, may be considered and are left for future work. 
To obtain a valid non-negative diagonal matrix $\bfSigma$, we parameterize the learnable diagonal matrix $\hat{\bfSigma}$ by
\begin{equation}
\label{eq:diffusion}
\bfSigma(t) = \exp\left(-{\rm{ReLU}}(\hat{\bfSigma} + \phi(t\bfE_{\bfSigma})) \right).
\end{equation} 
The exponent is often used to model a diffusion term 
for inverse problems of diffusion coefficients \cite{SchillingsStuart2017}. The matrix $\bfE_{\bfSigma} \in \mathbb{R}^{c \textcolor{black}{\times c}}$ is a time embedding \emph{diagonal} matrix that allows for a change in the diffusion coefficient over time.

\section{Experiments}
\label{sec:experiments}

We demonstrate our RDGNN on two types of tasks -  node classification and spatio-temporal node regression. The exact architecture specification is provided in Appendix \ref{appendix:architectures}. In all our experiments, we use the Adam \cite{kingma2014adam} optimizer, and perform a grid search to choose the hyper-parameters. More details about the hyper-parameters are given in Appendix \ref{appendix:hyperparams}. Our code is implemented using PyTorch \cite{pytorch} and PyTorch-Geometric \cite{pyg2019}, trained on an Nvidia RTX3090 with 24GB of memory.
The details and statistics of the datasets used in our experiments are provided in Appendix \ref{appendix:datasets}. \textcolor{black}{In Appendix \ref{app:complexity}, we provide measured training and inference runtimes of RDGNN and compare it with other methods. Our results show that while RDGNN requires added computations due to the added components, it also offers improved performance.}

We propose three variants of our RDGNN, as follows:
\begin{itemize}
    \item RDGNN-S. In this architecture the parameters $\bfK_{U} , \ \bfK_{U_{0}} , \ \bfK_t$, $\hat{\bfSigma}$,  $\bfe_f$, and $\bfE_{\bfSigma}$ are shared among all the layers
. This option is aligned with the idea of the final state being an approximate steady state of the classic RD process \cite{Turing52}.
    \item RDGNN-I. Here we follow a similar approach to typical neural networks, where different weights are learned for each layer. From an RD perspective, this can be interpreted as an RD unrolling iteration \cite{mardani2018neural}.
    \item RDGNN-T. A time-dependent RDGNN that stems from our RDGNN-S. In addition to RDGNN-S, it offers a time-dependent feature embedding $\bfe_f$, and $\bfE_{\bfSigma}$ from Equations \eqref{eq:simpleAdditiveReaction} and  \eqref{eq:diffusion}, respectively. 
\end{itemize}
Note, that in the case of RDGNN-S and RDGNN-I, the time embedding terms  \textcolor{black}{$\bfe_{f}$ and $\bfE_{\Sigma}$} are omitted. Further discussion is provided in \cref{appendix:architectures}.

\subsection{Node Classification}
\label{sec:nodeExperiments}
\paragraph{Homophilic data}
We experiment with the homophilic Cora \cite{mccallum2000automating}, Citeseer \cite{sen2008collective}, and Pubmed \cite{namata2012query} graph datasets. We use the 10 publicly available splits from \cite{Pei2020Geom-GCN:} with train/validation/test split ratio of $48 \%, 32\%, 20\%$ respectively, and report their average accuracy in \cref{table:homophilic_fully_std}. 
In an effort to establish a solid baseline, we consider multiple recent methods, such as GCN \cite{kipf2016semi}, GAT \cite{velickovic2018graph}, Geom-GCN \cite{Pei2020Geom-GCN:}, APPNP \cite{klicpera2018combining}, JKNet \cite{jknet}, MixHop \cite{abu2019mixhop},  WRGAT\cite{Suresh2021BreakingTL}, GCNII \cite{chen20simple}, PDE-GCN \cite{eliasof2021pde}, NSD \cite{bodnar2022neuralsheaf}, H2GCN \cite{zhu2020beyondhomophily_h2gcn}, GGCN \cite{yan2021two}, C\&S \cite{huang2020combining}, DMP \cite{yang2021diverse}, LINKX \cite{lim2021large}, and  ACMII-GCN++ \cite{luan2022revisiting}, and GREAD \cite{choi2022gread}. We see that our RDGNN-I outperforms all methods on the Cora and Pubmed datasets, and achieves close (0.14\%  accuracy difference) to the best performing PDE-GCN\textsubscript{M} on Citeseer.

\begin{table}[h]
  \caption{Node classification accuracy ($ \%$) on \emph{homophilic} datasets. $\dagger$ denotes the maximal accuracy of several proposed variants.}
  \label{table:homophilic_fully_std}   
  \center{
  \begin{tabular}{cccc}
    \toprule
    Method & Cora & Citeseer & Pubmed \\
    Homophily & 0.81 & 0.80 & 0.74 \\
    \midrule
    GCN  & 85.77 $\pm$ 1.27 & 73.68 $\pm$ 1.36 & 88.13 $\pm$ 0.50  \\
    GAT & 86.37 $\pm$ 0.48 & 74.32 $\pm$ 1.23 & 87.62 $\pm$ 1.10 \\
    GCNII\textsuperscript{$\dagger$} & 88.49 $\pm$ 1.25  & 77.13 $\pm$ 1.48 & 90.30 $\pm$ 0.43  \\
    Geom-GCN\textsuperscript{$\dagger$} & 85.27 $\pm$ 1.57  & 77.99 $\pm$ 1.15  & 90.05 $\pm$ 0.47\\
    MixHop  & 87.61 $\pm$ 2.03 & 76.26 $\pm$ 2.95 & 85.31 $\pm$ 2.29 \\
    WRGAT & 88.20 $\pm$ 2.26 & 76.81 $\pm$ 1.89 & 88.52 $\pm$ 0.92 \\
    NSD\textsuperscript{$\dagger$}  & 87.14 $\pm$ 1.13  & 77.14 $\pm$ 1.57   & 89.49  $\pm$ 0.40\\
    GGCN  & 87.95 $\pm$ 1.05  & 77.14  $\pm$ 1.45  & 89.15 $\pm$ 0.37 \\
    H2GCN  & 87.87 $\pm$ 1.20  & 77.11 $\pm$ 1.57  & 89.49 $\pm$ 0.38  \\
    LINKX  & 84.64 $\pm$ 1.13 & 73.19 $\pm$ 0.99 & 87.86  $\pm$ 0.77 \\ 
    PDE-GCN\textsubscript{M} & 88.60 $\pm$ N/A  & \textbf{78.48} $\pm$ N/A & 89.93 $\pm$ N/A \\
    NSD\textsuperscript{$\dagger$}  & 86.90 $\pm$ 1.13  & 76.70 $\pm$ 1.57   & 89.49 $\pm$ 0.40 \\
    GRAFF\textsuperscript{$\dagger$} & 88.01 $\pm$ 1.03 & 77.30 $\pm$ 1.85 & 90.04 $\pm$ 0.41 \\
    DMP\textsuperscript{$\dagger$}  & 86.52 $\pm$ N/A & 76.87 $\pm$ N/A & 89.27 $\pm$ N/A \\
    ACMII-GCN++ & 88.25 $\pm$ 0.96 & 77.12 $\pm$ 1.58 & 89.71 $\pm$ 0.48  \\
        GREAD\textsuperscript{$\dagger$} & 88.57  $\pm$ 0.66 & 77.60  $\pm$ 1.81 & 90.23  $\pm$ 0.55 \\
    \midrule
    RDGNN-I (Ours) & \textbf{89.91} $\pm$ \textbf{1.10} & 78.34 $\pm$  1.55& \textbf{90.37} $\pm$ \textbf{0.43} \\
    RDGNN-S (Ours)  & 89.53 $\pm$ 1.22 & 77.82 $\pm$ 1.48 & 90.11 $\pm$ 0.62\\

    \bottomrule
  \end{tabular}}
\end{table}

\begin{table*}[t]
\centering
  \caption{Node classification accuracy ($ \%$) on \emph{heterophilic} datasets. $\dagger$ denotes the maximal accuracy of several proposed variants.}
  \label{table:heterophilic_fully_std}
  \begin{center}
     \footnotesize
   \setlength{\tabcolsep}{1pt}
  \resizebox{1\linewidth}{!}{\begin{tabular}{ccccccc}
    \toprule
    Method & Squirrel & Film &  Cham. & Corn. & Texas & Wisc. \\
    Homophily & 0.22 & 0.22 & 0.23 & 0.30  & 0.11 & 0.21 \\
        \midrule
    GCN  & 23.96 $\pm$ 2.01 & 26.86  $\pm$  1.10 &  28.18  $\pm$  2.24&  52.70  $\pm$ 5.30  & 52.16  $\pm$ 5.16  & 48.92  $\pm$  3.06 \\
    GAT & 30.03  $\pm$  1.55 & 28.45  $\pm$ 0.89 & 42.93  $\pm$ 2.50 & 54.32  $\pm$ 5.05 & 58.38  $\pm$ 6.63 & 49.41  $\pm$ 4.09
    \\
    GCNII & 38.47  $\pm$ 1.58 & 32.87  $\pm$  1.30 &   60.61  $\pm$  3.04 & 74.86  $\pm$ 3.79 & 69.46  $\pm$ 3.83 & 74.12  $\pm$ 3.40 \\
    Geom-GCN\textsuperscript{$\dagger$}& 38.32  $\pm$ 0.92 & 31.63  $\pm$ 1.15 &  60.90  $\pm$ 2.81 & 60.81  $\pm$ 3.67 & 67.57  $\pm$ 2.72 & 64.12  $\pm$ 3.66 \\
    MixHop & 43.80  $\pm$  1.48 & 32.22  $\pm$ 2.34 & 60.50  $\pm$ 2.53 & 73.51  $\pm$ 6.34 & 77.84  $\pm$ 7.73 & 75.88  $\pm$ 4.90 \\ 
    GRAND & 40.05  $\pm$ 1.50 & 35.62  $\pm$ 1.01 &  54.67  $\pm$ 2.54 & 82.16  $\pm$ 7.09 & 75.68  $\pm$ 7.25 & 79.41  $\pm$ 3.64 \\ 
    PDE-GCN\textsubscript{M} & -- & -- &   66.01  $\pm$ N/A & 89.73 $\pm$ N/A  & 93.24 $\pm$ N/A  &  91.76 $\pm$ N/A \\
    NSD\textsuperscript{$\dagger$}  &  56.34  $\pm$  1.32 & 37.79  $\pm$ 1.15  & 68.68  $\pm$ 1.58  &  86.49  $\pm$ 4.71 & 85.95  $\pm$ 5.51 & 89.41  $\pm$  4.74 \\
    WRGAT & 48.85  $\pm$ 0.78 & 36.53  $\pm$ 0.77 & 65.24  $\pm$ 0.87 &  81.62  $\pm$ 3.90 & 83.62  $\pm$ 5.50 & 86.98  $\pm$ 3.78 \\
    MagNet &  --  & -- & --  & 84.30  $\pm$ 7.00 & 83.30  $\pm$ 6.10 & 85.70  $\pm$ 3.20 \\ 
    GGCN  & 55.17  $\pm$ 1.58 & 37.81  $\pm$ 1.56 &  71.14  $\pm$ 1.84 & 85.68  $\pm$ 6.63  & 84.86  $\pm$ 4.55  &  86.86  $\pm$ 3.29 \\
    H2GCN  & 36.48  $\pm$ 1.86 & 35.70  $\pm$ 1.00 & 60.11  $\pm$ 1.71 & 82.70  $\pm$ 5.28  & 84.86  $\pm$  7.23 &  87.65  $\pm$ 4.98 \\
    GraphCON\textsuperscript{$\dagger$} & -- & -- & -- & 84.30  $\pm$  4.80 & 85.40  $\pm$ 4.20  & 87.80  $\pm$ 3.30 \\  
    FAGCN & 42.59  $\pm$ 0.69 & 34.87  $\pm$ 1.35 & 55.22  $\pm$ 2.11 & 79.19  $\pm$ 5.87 & 82.43  $\pm$ 2.87 & 82.94  $\pm$ 1.58 \\
    GPRGNN & 31.61  $\pm$ 1.24 & 34.63  $\pm$ 1.22 & 46.58  $\pm$ 1.71 & 80.27  $\pm$ 8.11 & 78.38  $\pm$ 4.36 & 82.94  $\pm$ 4.21 \\
    ACMP-GCN & --  & -- & -- & 85.40  $\pm$  7.00 & 86.20  $\pm$ 3.00 & 86.10  $\pm$ 4.00  \\ 
    LINKX  & 61.81  $\pm$ 1.80 &  36.10  $\pm$ 1.55 & 68.42  $\pm$ 1.38 & 77.84  $\pm$ 5.81 & 
    74.60   $\pm$  8.37 & 75.49  $\pm$ 5.72 \\
    GRAFF\textsuperscript{$\dagger$} & 59.01  $\pm$ 1.31 & 37.11  $\pm$ 1.08 & 71.38  $\pm$ 1.47 & 84.05  $\pm$ 6.10 & 88.38  $\pm$ 4.53 & 88.83  $\pm$ 3.29\\ 
    DMP\textsuperscript{$\dagger$}  & 47.26 $\pm$ N/A & 35.72 $\pm$ N/A & 62.28 $\pm$ N/A & 89.19 $\pm$ N/A & 89.19 $\pm$ N/A & 92.16 $\pm$ N/A \\
G\textsuperscript{2}\textsuperscript{$\dagger$} & 64.26  $\pm$ 2.38 &  37.30  $\pm$ 1.01 & 71.40  $\pm$ 2.38 & 87.30  $\pm$ 4.84 &  87.57  $\pm$ 3.86  & 87.84  $\pm$ 3.49\\
    ACMII-GCN++ & \textbf{67.40  $\pm$ 2.21} & 37.09   $\pm$ 1.32 & 74.76  $\pm$ 2.2 & 86.49  $\pm$ 6.73 & 88.38  $\pm$ 3.43 & 88.43  $\pm$ 3.66 \\
    GREAD\textsuperscript{$\dagger$} & 59.22 $\pm$ 1.44 & 37.90 $\pm$ 1.17 & 71.38 $\pm$ 1.31 & 87.03 $\pm$ 4.95 & 89.73 $\pm$ 4.49 & 89.41 $\pm$ 3.30 \\
    \midrule
    RDGNN-I  (Ours)& 65.62 $\pm$ 2.33  & \textbf{38.69} $\pm$ \textbf{1.41} & \textbf{74.79} $\pm$ \textbf{2.14} & \textbf{92.72} $\pm$ \textbf{5.88} & 93.51 $\pm$ \
    5.93 &  \textbf{93.72} $\pm$  \textbf{4.59} \\
    RDGNN-S  (Ours)& 65.96 $\pm$ 2.11 & 38.55 $\pm$ 1.34 & 71.21 $\pm$ 2.08 & 92.43 $\pm$ 5.51 & \textbf{94.59} $\pm$ \textbf{5.97} &  92.94 $\pm$ 4.32 \\
    \bottomrule
  \end{tabular}}
\end{center}
\end{table*}

\begin{table*}[ht]
    \centering
    \caption{Test accuracy is provided (\%) for all datasets besides Twitch-DE that considers test ROC AUC$\ddagger$. Standard deviations are over 5 train/val/test splits. Not available results are indicated by --.}
    \label{tab:results}
    {\footnotesize
    \center{
    \begin{tabular}{ccccc}
    \toprule
    Method & Twitch-DE$\ddagger$ &  Deezer-Europe & Penn94 (FB100)  & arXiv-year  \\
    \midrule
     MLP & 69.20 $\std$ 0.62 &  66.55 $\std$ {0.72} & 73.61 $\std$ 0.40 & 36.70 $\std$ 0.21   \\    
     \hline
     L Prop (2 hop) & 72.27 $\std$ 0.78   & 56.96 $\std$ 0.26 & 74.13 $\std$ 0.46 &   $46.07\std{0.15}$  \\
     LINK & 72.42 $\std$ {0.57} & 57.71 $\std$ 0.36 & 80.79 $\std$ {0.49}  &  53.97 $\std$ {0.18}  \\    
     LINKX & -- & -- & 84.41 $\std$ 0.52 & 56.00 $\std$ {1.34} \\
     \hline
     SGC (2 hop) &  73.65 $\std$ {0.40} &  61.56 $\std$ {0.51} &  76.09 $\std$ 0.45 &   32.27 $\std$ {0.06}   \\    
     C\&S (2 hop) & 69.39 $\std$ 0.85 &  64.52 $\std$ 0.62 & 72.47 $\std$ 0.73 &    42.17 $\std$ 0.27 \\    
     \hline
     GCN &  74.07 $\std$ 0.68  & 62.23 $\std$ 0.53 &  82.47 $\std$ {0.27} &   46.02 $\std$ 0.26 \\
     GAT & 73.13 $\std$ 0.29 & 61.09 $\std$ {0.77} & 81.53 $\std$ {0.55}  &  46.05 $\std$ {0.51}  \\
     APPNP & 72.20 $\std$ {0.73} &  67.21 $\std$ {0.56} & 74.95 $\std$ {0.45} &   38.15 $\std$ {0.26}  \\
     \hline
     H2GCN & 72.67 $\std$ {0.65} &    67.22 $\std$ {0.90} & -- &    49.09 $\std$ {0.10}  \\
     GCNII & 72.38 $\std$ {0.31} & 66.42 $\std$ {0.56} & 82.92 $\std$ {0.59} & 47.21 $\std$ {0.28} \\
     MixHop & 73.23 $\std$ {0.99}   & 66.80 $\std$ {0.58} &  83.47 $\std${0.71} &      51.81 $\std$ {0.17}  \\
     GPR-GNN &  73.84 $\std$ {0.69}  &  66.90 $\std$ {0.50} &  84.59 $\std$ {0.29}   & 45.07 $\std$ {0.21}    \\
     ACMII-GCN++ & -- & \textbf{67.50} $\std$ \textbf{0.53} & 85.95 $\std$ {0.26} & -- \\
     \midrule
    RDGNN-I (Ours) & \textbf{74.31} $\std$ \textbf{0.74} & {67.43} $\std$ {0.46}  & \textbf{86.04} $\std$ \textbf{0.28} & \textbf{58.46} $\std$ \textbf{0.59}\\
     RDGNN-S (Ours) & {73.99} $\std$ {0.67} & 66.81 $\std$ {0.25} & 85.37 $\std$ {0.33} & 55.16 $\std$ {0.46} \\
    \bottomrule
    \end{tabular}}
    }
\end{table*}

\paragraph{Heterophilic data} While our RDGNN models offer competitive accuracy on homophilic datasets, the main benefit of a RD model, and as such our RDGNN, is its ability to model non-smooth patterns \textcolor{black}{that} often appear in heterophilic datasets by their definition \cite{Pei2020Geom-GCN:}, as visualized in Figure \ref{fig:heterophilic_data}. We therefore utilize 10 heterophilic datasets from various sources. In  \cref{table:heterophilic_fully_std} we report the average accuracy of our RDGNN with recent GNNs on the Squirrel, Film, and Chameleon from \cite{musae}, as well as the Cornell, Texas and Wisconsin datasets from \cite{Pei2020Geom-GCN:}, using the 10-splits from \cite{Pei2020Geom-GCN:} \textcolor{black}{with} train/validation/test split ratio of $48 \% / 32\% / 20\%$. In addition to the previously considered methods, here we also consider  FAGCN \cite{fagcn2021}, GraphCON \cite{rusch2022graph}, GPR-GNN \cite{chien2021adaptive}, GRAFF \cite{di2022graphGRAFF}, ACMP-GCN \cite{wang2022acmp}, and G\textsuperscript{2} \cite{rusch2022gradient}. We see that both our RDGNN-I and RDGNN-S variants offer high accuracy that is in line with recent state-of-the-art methods, and on most datasets RDGNN-I is the best performing model, besides on the Squirrel dataset where ACMI-GCN++ outperforms our RDGNN-S by 1.44\%.

\begin{figure}   
\centering
        \subfloat[Cornell]{
\includegraphics[width=0.32\textwidth]{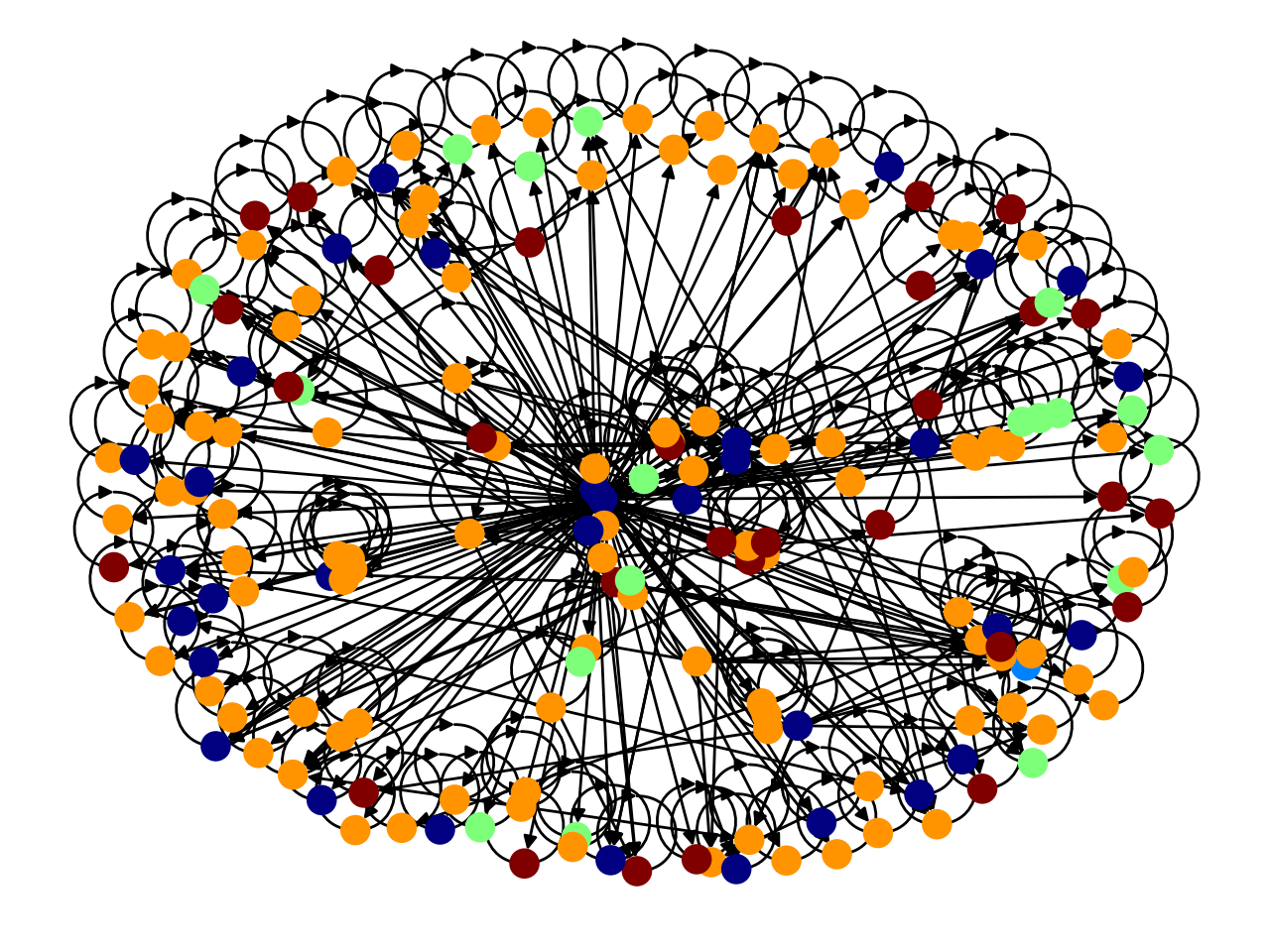}}
    \subfloat[Wisconsin]{    
\includegraphics[width=0.32\textwidth]{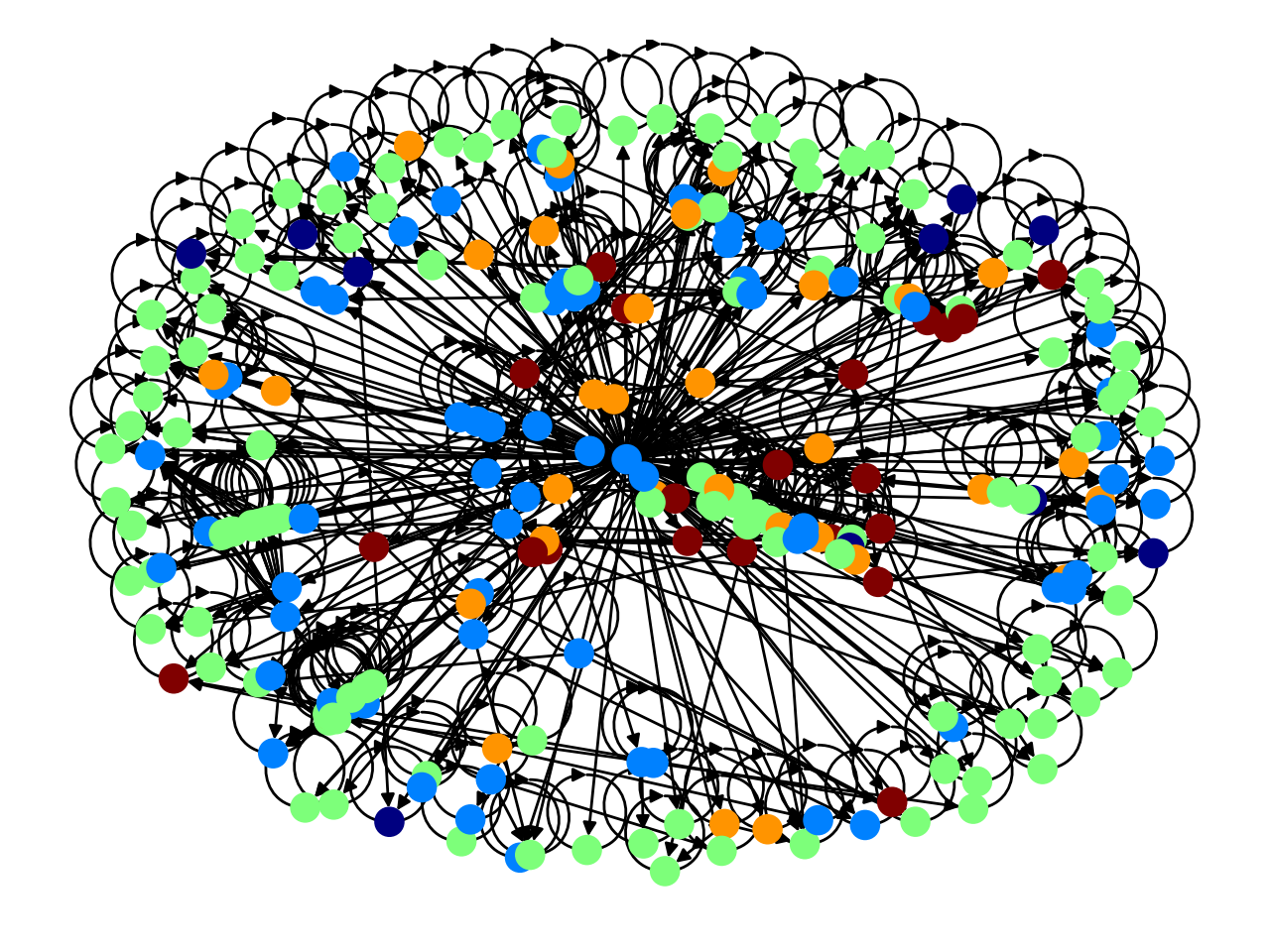}}
    \subfloat[Texas]{  
\includegraphics[width=0.32\textwidth]{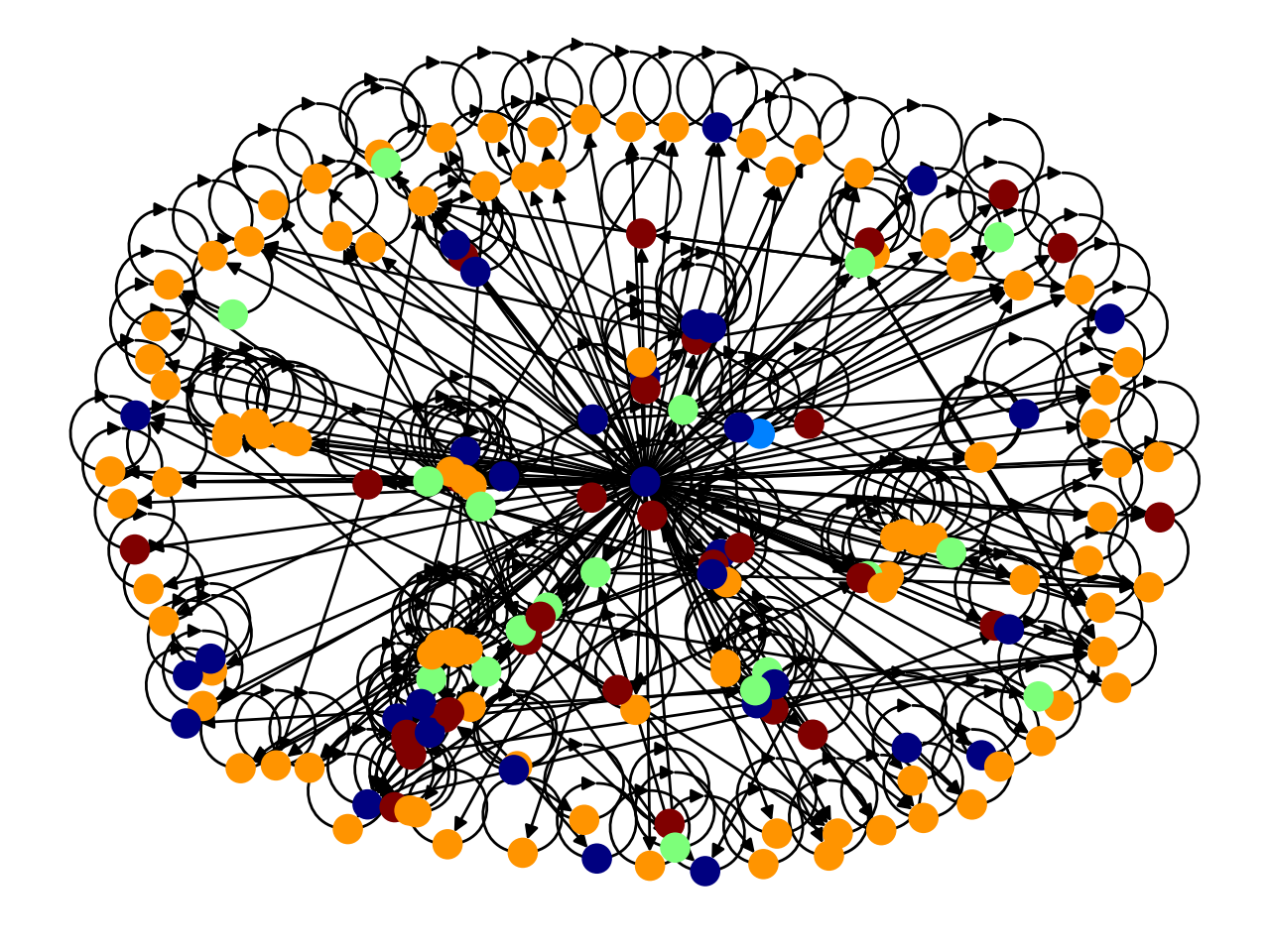}}
\caption{An example of the ground-truth labels of (a) Cornell (b) Wisconsin, and (c) Texas. Different colors indicate different labels. It is evident that non-smooth patterns exist in the data, which are suitable to be modeled with RD systems.}
\label{fig:heterophilic_data}
\end{figure}

Furthermore, it was recently identified by \cite{lim2021new, lim2021large} that the Cornell, Texas, and Wisconsin datasets suffer from the issues of being small and their tendency to yield high standard deviations in the obtained accuracy. Therefore, we also examine our RDGNN models with the datasets proposed by \cite{lim2021new, lim2021large} to facilitate the efficacy of our method further. Namely, in  \cref{tab:results} we provide the obtained accuracy of our RDGNN models and compare it with other recent methods on the Twitch-DE, deezer-europe, Penn94, and arXiv-year datasets from \cite{lim2021new, lim2021large}. In addition to previously mentioned works, here we also compare with SGC \cite{wu2019simplifying} and L prop \cite{peel2017graph} and LINK \cite{zheleva2009join}. Here, we see that our RDGNN-I achieves similar or better accuracy compared with other models. For example, our RDGNN-I obtains 87.04\% accuracy compared to the second best method ACMII-GCN++ with 85.95\%.

\subsection{Spatio-Temporal Node Regression}
\label{sec:temporalExperiments}
Classical RD models are widely utilized to predict and model spatio-temporal phenomena \cite{fiedler2003spatio}. We therefore now evaluate our RDGNN in such a scenario on several spatio-temporal node regression datasets.
Specifically, we harness the software package PyTorch-Geometric-Temporal by \cite{rozemberczki2021pytorch} that offers a graph machine learning pipeline with  \textcolor{black}{numerous} spatio-temporal graph datasets. In our experiments, we use the Chickenpox Hungary, PedalMe London, and Wikipedia Math datasets. We follow the same training and testing procedure from \cite{rozemberczki2021pytorch}, including the experimentation with both incremental and cumulative training modes \footnote{See \cite{rozemberczki2021pytorch} for details about training modes.}. We report the predictive performance of our RDGNN and other models, in terms of mean squared error (MSE), in  \cref{tab:predictive_performance}. We follow \cite{rozemberczki2021pytorch} and compare with several recent methods such as DCRNN \cite{li2018diffusion}, GConvGRU and GConvLSTM \cite{gconvlstm}, GC-LSTM \cite{gclstm}, DyGrAE \cite{dyggnn, dyngrae_1}, EGCN-H and EGCN-O \cite{evolvegcn}, A3T-GCN \cite{a3tgcn}, T-GCN \cite{tgcn}, MPNN LSTM \cite{panagopoulos2020transfer}, and AGCRN \cite{bai2020adaptive}. We observe that our RDGNN improves the considered baseline models. In particular, our time-embedded variant RDGNN-T  outperforms all the considered baseline methods by a large margin.

\begin{table*}[]
\centering
\caption{The predictive performance of spatio-temporal neural networks evaluated by average MSE of 10 experimental repetitions and standard deviations, calculated on 10\% forecasting horizons. We consider both incremental and cumulative backpropagation strategies.}\label{tab:predictive_performance}
{
\footnotesize
\resizebox{1\linewidth}{!}{
\begin{tabular}{ccccccc}
    \toprule
        \multirow{2}{*}{Method}
        & \multicolumn{2}{c}{{Chickenpox Hungary}}&  \multicolumn{2}{c}{{PedalMe London}} & \multicolumn{2}{c}{Wikipedia Math} \\
\cmidrule[0.4pt](lr{0.125em}){2-3}
\cmidrule[0.4pt](lr{0.125em}){4-5}
\cmidrule[0.4pt](lr{0.125em}){6-7}
       & Incremental     & Cumulative  & Incremental     & Cumulative     & Incremental     & Cumulative \\ \hline
{DCRNN} &    1.124 $\pm$ 0.015      &    1.123 $\pm$ 0.014       &       1.463$\pm$ 0.019     &        1.450 $\pm$ 0.024  &     
0.679 $\pm$ 0.020      &     0.803 $\pm$ 
 0.018        \\[0.1cm]
{GConvGRU} & 1.128 $\pm$ 0.011      &    1.132 $\pm$ 0.023           &      1.622 $\pm$ 0.032     &        1.944 $\pm$ 0.013  &     0.657 $\pm$ 0.015      &     0.837 $\pm$ 0.021        \\[0.1cm]
{GConvLSTM} & 1.121 $\pm$ 0.014      &    
 1.119 $\pm$ 0.022    &       1.442 $\pm$ 0.028     &        1.433 $\pm$ 0.020 &     0.777 $\pm$ 0.021      &     0.868 $\pm$ 0.018        \\[0.1cm]
{GC-LSTM} &    1.115 $\pm$ 0.014      &    1.116 $\pm$ 0.023       &      1.455 $\pm$ 0.023     &        1.468 $\pm$ 0.025  &     0.779 $\pm$  0.023      &     0.852 $\pm$ 0.016        \\[0.1cm]
{DyGrAE} & 1.120 $\pm$ 0.021      &    1.118 $\pm$ 0.015         &      1.455 $\pm$ 0.031     &        1.456 $\pm$ 0.019  &     0.773 $\pm$ 0.009      &     0.816 $\pm$ 0.016        \\[0.1cm]
{EGCN-H} & 1.113 $\pm$ 0.016      &    1.104 $\pm$ 0.024       &      $1.467\pm 0.026$     &        1.436 $\pm$ 0.017  &     0.775 $\pm$ 0.022      &     0.857 $\pm$ 0.022        \\[0.1cm]

{EGCN-O} &    1.124 $\pm$ 0.009      &    1.119 $\pm$ 0.020       &    1.491 $\pm$ 0.024     &        1.430 $\pm$ 0.023  &     
0.750 $\pm$ 0.014      &     0.823 $\pm$ 0.014        \\[0.1cm]
{A3T-GCN}& 1.114 $\pm$ 0.008      &    1.119 $\pm$ 0.018       &     1.469 $\pm$ 0.027     &        1.475 $\pm$ 0.029  &     0.781 $\pm$ 0.011      &     0.872 $\pm$ 0.017        \\[0.1cm]
{T-GCN} & 1.117 $\pm$ 0.011      &  1.111 $\pm$ 0.022  &    1.479 $\pm$ 0.012     &        1.481 $\pm$ 0.029  &     0.764 $\pm$ 0.011      &     0.846 $\pm$ 0.020        \\[0.1cm]
{MPNN LSTM} & 1.116 $\pm$ 0.023      &    1.129 $\pm$ 0.021       &       1.485 $\pm$ 0.028     &        1.458 $\pm$ 0.013  &     0.795 $\pm$ 0.010      &     0.905 $\pm$ 0.017        \\[0.1cm]
{AGCRN} & 1.120 $\pm$ 0.010      &    1.116 $\pm$ 0.017      &      1.469 $\pm$ 0.030     &        1.465 $\pm$ 0.026  &     0.788 $\pm$ 0.011      &     0.832 $\pm$ 0.020        \\[0.1cm]
\hline
{RDGNN-I} (Ours) & 1.110 $\pm$ 0.008 & 1.104 $\pm$ 0.004 &   1.271 $\pm$ 0.028  & 1.310 $\pm$ 0.019  & 0.621 $\pm$ 0.010   & 0.674  $\pm$ 0.022  \\
{RDGNN-S} (Ours) & 1.106 $\pm$ 0.008 & 1.103 $\pm$ 0.010  & 1.202 $\pm$ 0.032  & $1.278\pm 0.031$ & 0.612 $\pm$ 0.013 &  0.623 $\pm$ 0.014\\
{RDGNN-T} (Ours) & $\mathbf{1.100 \pm 0.005}$&  $\mathbf{1.097 \pm 0.003}$ &  $\mathbf{0.805 \pm 0.027}$  & $\mathbf{0.791 \pm  0.036}$  & $\mathbf{0.579 \pm 0.014}$ & $\mathbf{0.586 \pm 0.021}$ \\
\bottomrule
\end{tabular}
}}
\end{table*}

\subsection{Ablation Study}
\label{sec:ablation}
In this section, we aim to gain a deeper understanding of the contribution of each component of our RDGNN models.

\paragraph{RDGNN alleviates oversmoothing}
Our RDGNN models potentially prevent oversmoothing in two manners. The first is by using the learnable diffusion coefficients $\bfSigma$ that control the smoothness, and the second is due to the reaction function $f$, as described in Equation \eqref{eq:discDRa}. We therefore now study if our RDGNN models do not oversmooth in practice on real-world datasets. To this end, we train our RDGNN-I and RDGNN-\textcolor{black}{S} on the Cora, Chameleon, and Cornell datasets with a varying number of layers, from 2 to 64. We report the obtained accuracy in 
 \cref{fig:depth_study}, where it is evident that our RDGNN models do not oversmooth.
\textcolor{black}{
 Furthermore, it is important to note that while including the initial features $\bfU_0$ is beneficial in alleviating oversmoothing as shown in \cite{chen20simple}, our RDGNN can  alleviate oversmoothing also without including $\bfU_0$, as theoretically discussed in \ref{sec:stabilityAnalysis}. We also empirically verify this theoretical understanding as shown in Figure \ref{fig:depth_study_u0}. We observe that the absolute accuracy is slightly lower than the case where $\bfU_0$ is included in the learning. The improved performance when considering $\bfU_0$ can be attributed to the increased complexity of the network. Notably, it is evident that an RDGNN that does not include $\bfU_0$ in the reaction term, denoted by RDGNN\textsubscript{no$\bfU_0$}, does not oversmooth.}

\begin{figure}[h]
    \centering
    \centering
    \begin{tikzpicture}
      \begin{axis}[
          width=0.7\linewidth, 
          height=0.42\linewidth,
          grid=major,
          grid style={dashed,gray!30},
          xlabel=Layer,
          ylabel=Accuracy (\%),
          ylabel near ticks,
          legend style={at={(0.65,0.25)},anchor=north,scale=0.8, draw=none, cells={anchor=west}, font=\tiny, fill=none},
          legend columns=3,
          xtick={1,2,4,8,12,16,32, 64},
          xticklabels = {1,2,4,8,12,16,32, 64},
          yticklabel style={
            /pgf/number format/fixed,
            /pgf/number format/precision=3
          },
          ticklabel style = {font=\tiny},
          scaled y ticks=false,
          every axis plot post/.style={thick},
          ymin = 60
        ]
        \addplot[blue, mark=oplus*, forget plot]
        table[x=nlayer
,y=cora_ind,col sep=comma] {data/depth_study.csv};
        \addplot[green, mark=oplus*, forget plot]
        table[x=nlayer
,y=chameleon_ind,col sep=comma] {data/depth_study.csv};
        \addplot[red, mark=oplus*, forget plot]
        table[x=nlayer
,y=texas_ind,col sep=comma] 
{data/depth_study.csv};
%%%%%%
        \addplot[blue, style=dashed, mark=oplus*, forget plot]
        table[x=nlayer
,y=cora_share,col sep=comma] {data/depth_study.csv};
        \addplot[green, style=dashed, mark=oplus*, forget plot]
        table[x=nlayer
,y=chameleon_share,col sep=comma] {data/depth_study.csv};
        \addplot[red, style=dashed, mark=oplus*, forget plot]
        table[x=nlayer
,y=texas_share,col sep=comma] {data/depth_study.csv};

    \addplot[blue, draw=none] coordinates {(1,1)};
    \addplot[green, draw=none] coordinates {(1,1)};
    \addplot[red, draw=none] coordinates {(1,1)};

    \addplot[gray, draw=none] coordinates {(1,1)};
    \addplot[gray, style=dashed, draw=none] coordinates {(1,1)};

    \legend{Cora, Chameleon, Texas, {RDGNN-I}, {RDGNN-S}}
        \end{axis}
    \end{tikzpicture}
\caption{The accuracy (\%) on homophilic and heterophilic datasets vs. model depth. Our RDGNN models do not suffer from oversmoothing.}
\label{fig:depth_study}
\end{figure}
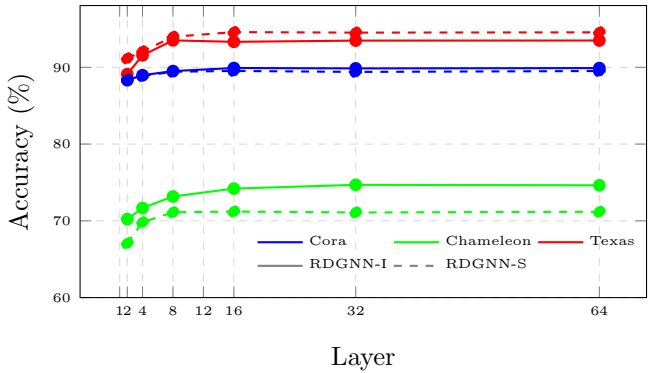

\begin{figure}[h]
    \centering
    \centering
    \begin{tikzpicture}
      \begin{axis}[
          width=0.7\linewidth, 
          height=0.42\linewidth,
          grid=major,
          grid style={dashed,gray!30},
          xlabel=Layer,
          ylabel=Accuracy (\%),
          ylabel near ticks,
          legend style={at={(0.65,0.25)},anchor=north,scale=0.8, draw=none, cells={anchor=west}, font=\tiny, fill=none},
          legend columns=2,
          xtick={1,2,4,8,12,16,32, 64},
          xticklabels = {1,2,4,8,12,16,32, 64},
          yticklabel style={
            /pgf/number format/fixed,
            /pgf/number format/precision=3
          },
          ticklabel style = {font=\tiny},
          scaled y ticks=false,
          every axis plot post/.style={thick},
          ymin = 60
        ]
        \addplot[blue, mark=oplus*, forget plot]
        table[x=nlayer
,y=cora_ind,col sep=comma] {data/depth_study_u0.csv};
        \addplot[green, mark=oplus*, forget plot]
        table[x=nlayer
,y=chameleon_ind,col sep=comma] {data/depth_study_u0.csv};

%%%%%%
        \addplot[blue, style=dashed, mark=oplus*, forget plot]
        table[x=nlayer
,y=cora_ndrp,col sep=comma] {data/depth_study_u0.csv};
        \addplot[green, style=dashed, mark=oplus*, forget plot]
        table[x=nlayer
,y=chameleon_ndrp,col sep=comma] {data/depth_study_u0.csv};

    \addplot[blue, draw=none] coordinates {(1,1)};
    \addplot[green, draw=none] coordinates {(1,1)};

    \addplot[gray, draw=none] coordinates {(1,1)};
    \addplot[gray, style=dashed, draw=none] coordinates {(1,1)};

    \legend{Cora, Chameleon, {RDGNN}, {RDGNN\textsubscript{no$\bfU_0$}}}
        \end{axis}
    \end{tikzpicture}
\caption{\textcolor{black}{RDGNN alleviates oversmoothing regardless of the inclusion of $\bfU_0$ in the reaction term. The base RDGNN here is RDGNN-I.}}
\label{fig:depth_study_u0}
\end{figure}
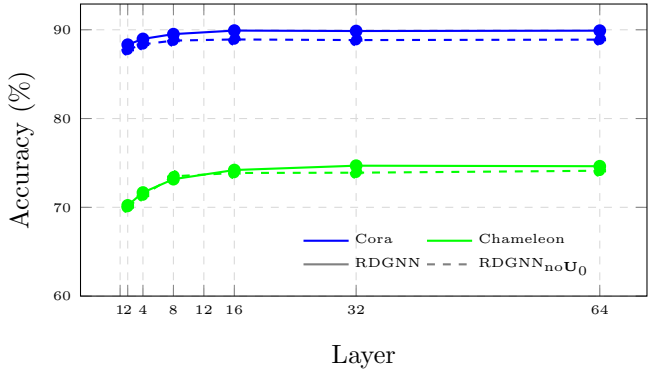

\paragraph{Explicit vs. Implicit-Explicit Integration}
In \cref{sec:timeDiscretization} we discuss two time-integration methods. The first method is the forward Euler integration scheme, and the second is the \textcolor{black}{numerically} more stable implicit-explicit (IMEX) \cite{ascher2008numerical}. It is therefore interesting to understand if the IMEX scheme performs better on real-world datasets. According to our results, provided in  \cref{table:integration_scheme}, we see that the IMEX approach offers consistently better performance compared to the explicit method on 3 different datasets, both homophilic and heterophilic. Thus, we use the IMEX discretization scheme throughout the rest of our experiments.

\begin{table}[]
  \caption{Impact of Time Integration scheme.}
  \label{table:integration_scheme}   
  \center{
  \footnotesize
  \begin{tabular}{ccccc}
    \toprule
    Integration & Method & Cora & Cham. & Texas \\
   scheme & Homophily & 0.81 & 0.22 & 0.11 \\
    \midrule
    
    Explicit & RDGNN-I  & 88.89 & 68.91 & 92.80 \\
    & RDGNN-S & 88.67 & 67.22 & 94.04 \\
    \midrule
        IMEX &RDGNN-I & 89.91 & 74.69  &  93.51\\
    &RDGNN-S  & 89.53 &  71.21 & 94.59\\

    \bottomrule
  \end{tabular}}
\end{table}

\paragraph{The influence of the reaction function $f$}
In \cref{sec:ReactionAndDiffusion_Funcs} we discuss  our reaction term, starting from a simple MLP, called an \emph{additive} layer, that considers only the features of the current layer $\bfU_{l}$. See \cref{appendix:architectures} for more details. Furthermore, we  consider the input features embedding $\bfU_{0}$ using an additional layer as described in  \cref{eq:simpleAdditiveReaction}. Lastly, we suggest, based on typical RD systems \cite{fiedler2003spatio}, to use a multiplicative layer as described in  \cref{eq:multiplicativeLayer}. We report the obtained accuracy using each of the proposed terms individually and compare it to our full reaction term from \cref{eq:totalReaction} (All), in \cref{table:reactionFunction}. We see that considering the initial features \textcolor{black}{$\bfU_0$ in the reaction term is beneficial for both homophilic and heterophilic datasets, compared to a simple additive layer that can be achieved by setting $\bfK_{\bfU_0}=0$ in Equation \eqref{eq:simpleAdditiveReaction}, effectively not considering $\bfU_0$ as part of the reaction function.}  In addition, we find that the multiplicative reaction term is beneficial for heterophilic datasets. For example, our RDGNN-I with a multiplicative layer offers a 1.7\% improvement compared to the simple additive layer on the Chameleon dataset.

\begin{table}[]
  \caption{Impact of the reaction function.}
  \label{table:reactionFunction}   
  \center{
  \footnotesize
  \begin{tabular}{ccccc}
    \toprule
    Reaction & Method & Cora & Cham. & Texas \\
    function & Homophily & 0.81 & 0.22 & 0.11 \\
    \midrule
    
    Additive & RDGNN-I  & 88.81 & 72.09 & 91.30\\
    &RDGNN-S & 88.64 & 68.91 & 91.94 \\
    \midrule
        Additive with $\bfU_0$ & RDGNN-I  & 89.55 & 72.88 & 92.50 \\
    &RDGNN-S  & 89.38 & 69.20 & 93.34\\
    \midrule
    Multiplicative  & RDGNN-I  &  89.34 & 73.79 & 93.27 \\
    &RDGNN-S  & 89.17 & 70.83 & 93.76 \\
    \midrule
    All  & RDGNN-I  & 89.91 & 74.69 & 93.51  \\
    &RDGNN-S  &  89.53 & 71.21 & 94.59\\
    \bottomrule
  \end{tabular}}
\end{table}
\paragraph{\textcolor{black}{Influence of the Dropout layer}} \textcolor{black}{A common practice in neural networks, and specifically in node-classification in GNNs \cite{velickovic2018graph,rusch2022gradient} is the utilization of the dropout layer \cite{srivastava2014dropout} to improve performance and generalization. We now show the impact of applying the dropout layer to our RDGNN, as prescribed in Appendix \ref{appendix:architectures}. The results, shown in Table \ref{table:dropout}, indicate that similar to previous works, adding dropout to the optimization stage of RDGNN improves its accuracy.}

\begin{table}[]
  \caption{\textcolor{black}{Impact of using Dropout in RDGNN.}}
  \label{table:dropout}   
  \center{
  \footnotesize
  \begin{tabular}{ccccc}
    \toprule
     \multirow{2}{*}{Dropout} & Method & Cora & Cham. & Texas \\
     & Homophily & 0.81 & 0.22 & 0.11 \\
    \midrule
    
    \xmark  & RDGNN-I  & 87.90 & 72.98 & 91.22  \\
    &RDGNN-S  &  87.64 & 70.19 & 92.03\\
    \midrule

    \cmark  & RDGNN-I  & 89.91 & 74.69 & 93.51  \\
    &RDGNN-S  &  89.53 & 71.21 & 94.59\\
    \bottomrule
  \end{tabular}}
\end{table}

\section{Conclusion}
\label{sec:conclusion}
In this paper we proposed RDGNN - a new family of reaction diffusion graph neural networks that offers a perspective on GNNs from a locally unstable dynamical system, taking inspiration from Turing instabilities. The RD equation is discretized with an implicit\textcolor{black}{-explicit} integration scheme to yield RDGNN. We show analysis for the behavior of the time-continuous RD equation, explaining how local instabilities can occur. Because of this property, our GNN is most suitable for datasets that require the generation of non-smooth patterns, e.g., heterophilic datasets, but can also be trained to yield a diffusive process if the dataset requires it (e.g, for homophilic datasets). Through an extensive set of experiments, on  static, temporal, homophilic, and heterophilic datasets, we demonstrate the advantage of our RDGNN over existing models.

\appendix

\section{Architectures}
\label{appendix:architectures}
The general architecture of our RDGNN variants is given in \cref{table:generalArch}.
We initialize all $1\times1$ convolutions, and time embedding parameters (when using RDGNN-T) using  Glorot \cite{glorot2010understanding} initialization. We initialize the parameter $\hat{\bfSigma}$ from \cref{eq:diffusion} such that the diffusion coefficients $\bfSigma$ start from ones.

\paragraph{The Reaction and Diffusion Functions}
In \cref{sec:ReactionAndDiffusion_Funcs} we discuss the additive and multiplicative reaction functions used in our experiments, as well as the parameterization of the diffusion coefficients $\bfSigma(t)$. 
The reaction function $f$ in \cref{eq:simpleAdditiveReaction}, and the diffusion coefficient parameterization in \cref{eq:diffusion} are written in a most generic manner, as a function of time $t$. For clarity, it is important to note that when utilizing our time-\emph{independent} variants RDGNN-I or RDGNN-S, the time embedding is not learnt, leading to the following corresponding time-independent reaction term
\begin{equation}
    \label{eq:noTimeEmbeddingReaction}
            f_{\rm add}(\bfU, \bfU_{0}) = \sigma \left(\bfU \bfK_{U}  + \bfU_{0} \bfK_{U_{0}}  \right),
\end{equation}
and diffusion term
\begin{equation}
    \label{eq:noTimeEmbeddingDiffusion}
    \bfSigma = \exp\left(-{\rm{ReLU}}(\hat{\bfSigma}) \right).
\end{equation}
When utilizing our time-embedded RDGNN-T, we use  \cref{eq:simpleAdditiveReaction} and \cref{eq:diffusion}. Furthermore, in our ablation study in \cref{sec:ablation}, first row in \cref{table:reactionFunction}, we experiment with a reaction function that does not take the initial node embedding $\bfU_{0}$ into consideration, as follows:
\begin{equation}
    \label{eq:noTimeEmbeddingReactionnoU0}
            f_{\rm add}(\bfU) = \sigma \left(\bfU \bfK_{U} \right).
\end{equation}

\begin{table}[H]
  \caption{The general RDGNN architecture.}
  \label{table:generalArch}
  \begin{center}
  \begin{tabular}{lcc}
  \toprule
    Input size & Layer  &  Output size \\
    \midrule
    $n \times c_{in}$ & \textcolor{black}{Dropout(p)} & $n \times c_{in}$ \\
    $n \times c_{in}$ & $1\times1$ Convolution & $n \times c$ \\
    $n \times c$ & ReLU & $n \times c$ \\    
    $n \times c$ & $L \times $ RDGNN layers & $n \times c$ \\
    $n \times c$ & Dropout(p) & $n \times c$ \\
    $n \times c$ & $1\times1$ Convolution & $n \times c_{out}$ \\
    \bottomrule
  \end{tabular}
\end{center}
\end{table}

\section{Hyperparameters}
\label{appendix:hyperparams}
All hyperparameters were determined by grid search, and the ranges and sampling mechanism distributions are provided in Table \ref{tab:hyperparams}. 

\begin{table}[h]
\centering
\caption{Hyperparameter ranges}
{
\label{tab:hyperparams}
\begin{tabular}{ccc}
\toprule
Hyperparameter   & Range & Uniform Dist. \\
\midrule
     input/output embedding learning rate &  [1e-4, 1e-1] & log uniform  \\
    diffusion learning rate & [1e-4, 1e-1] & log uniform   \\
      reaction learning rate &  [1e-4, 1e-1] & log uniform  \\
       input/output embedding weight decay &  [0, 1e-2] & uniform \\
    diffusion weight decay & [0, 1e-2] & uniform  \\
      reaction weight decay & [0, 1e-2] & uniform  \\
      input/output dropout &  [0, 0.9] & uniform \\
      hidden layer dropout & [0, 0.9] & uniform \\
      use BatchNorm & \{ yes / no \} & discrete uniform \\
      step size h &  [1e-3, 1] & uniform \\
      layers & \{ 2,4,8,16,32,64 \} & discrete uniform \\
      channels &  \{ 8,16,32,64,128,256 \} & discrete uniform  \\
 \bottomrule
\end{tabular}
}
\end{table}

\section{Datasets}
\label{appendix:datasets}
We report the statistics of the datasets used in our experiments in \cref{table:datasets} and \cref{tab:desc_discrete} for the node classification, and spatio-temporal node regression datasets, respectively.
All datasets are publicly available, and appropriate reference to the data sources is provided in the main text.

\begin{table}[H]
  \caption{Node classification datasets statistics.}
  \label{table:datasets}
  \begin{center}
  \begin{tabular}{lccccc}
  \toprule
    Dataset & Classes & Nodes & Edges & Features & Homophily \\
    \midrule
    Cora & 7 & 2,708 & 5,429 & 1,433 & 0.81\\
    Citeseer & 6 & 3,327  & 4,732 & 3,703 & 0.80\\
    Pubmed & 3 & 19,717 & 44,338 & 500 & 0.74 \\
    Chameleon & 5 & 2,277 &  36,101 & 2,325 & 0.23\\
    Film & 5 & 7,600 & 33,544 & 932 & 0.22  \\
    Squirrel & 5 & 5,201 & 198,493 &  2,089 & 0.22 \\
    Cornell & 5 & 183 & 295 & 1,703 & 0.30\\
    Texas & 5 & 183 & 309 & 1,703 & 0.11 \\
    Wisconsin & 5 & 251 & 499 & 1,703 & 0.21 \\
    Twitch-DE & 2 & 9,498 & 76,569 & 2,545 & 0.63 \\
    Deezer-Europe & 2 &  28,281 &  92,752 & 31,241  & 0.52  \\
    Penn94 (FB100) &  2 &  41,554 &  1,362,229 & 5 & 0.47  \\
    arXiv-year & 5 & 169,343 &  1,166,243 & 128 & 0.22 \\
    \bottomrule
  \end{tabular}
\end{center}
\end{table}

\begin{table}[t]
\centering
\caption{Attributes of the spatio-temporal datasets used in  \cref{sec:temporalExperiments} and information about the number of time periods ($T$) and spatial units ($|\mathcal{V}|$).}\label{tab:desc_discrete}
{
\begin{tabular}{cccccc}
\toprule
Dataset & Signal & Graph & Frequency & $T$ & $|\mathcal{V}|$ \\
\midrule
    Chickenpox Hungary & Temporal&Static & Weekly & 522 & 20 \\
     Pedal Me Deliveries & Temporal & Static & Weekly & 36 & 15 \\
 Wikipedia Math & Temporal&Static&Daily & 731 & 1,068 \\
 \bottomrule
\end{tabular}
}
\end{table}

\section{Complexity \textcolor{black}{ and Runtimes}}
\label{app:complexity}
\paragraph{Number of parameters}
Typical neural networks, including GNNs usually employ new learnable weights per layer, i.e., the number of parameters increases as more layers are added. In our RDGNN we offer both formulations. Our RDGNN-I utilizes individual weights per layer, while our RDGNN-S and RDGNN-T share parameters across layers. In our experimental results on node classification in \cref{sec:nodeExperiments} we see that both RDGNN-I and RDGNN-S offer high accuracy that is in line with recent state-of-the-art methods. In our spatio-temporal experiments in \cref{sec:temporalExperiments}, we found that sharing parameters with time embedding, i.e., our RDGNN-T offers the highest results among our proposed RDGNN variants.

\paragraph{Implicit-explicit solution}
The system presented here requires the solution of the linear system at each step. As previously discussed this can be achieved by direct methods (Cholesky) for small-scale problems or, by using the CG method for large-scale problems.
While the backward function is implemented in
different software packages for the inverse
matrix, it is not so for the CG algorithm which may require the automatic differentiation to trace through the CG iterations. This can be avoidable by using implicit differentiation, which is the workhorse of implicit methods (see \cite{haberBook2014} for detailed derivation). In the context of deep learning, implicit differentiation was used for
implicit neural networks \cite{gu2020implicit}.
The basic idea is to use implicit differentiation
of the equation 
$${\rm vec}(\bfV) = ((\bfI + h \bfSigma \otimes \hat\bfL ) \rm{vec}(\bfU)) $$
with respect to  ${\bfSigma}$  and thus avoid the potentially expensive tracing of the CG iterations if many are needed.
{\paragraph{\textcolor{black}{Runtimes}} \textcolor{black}{In addition to the complexity analysis above, we  provide the measured runtimes in Table \ref{tab:runtimes}. Learning the reaction term requires an increased computational effort compared to baseline models like GCN. However, it also improves the task accuracy. For convenience, in Table \ref{tab:runtimes}, in addition to the runtimes, we also report the obtained task metric. Importantly, we find that the improved accuracy offered by RDGNN with 64 channels and 16 layers (where RDGNN reaches to accuracy plateau with respect to the model's depth) - is not simply obtained due to the increased costs (i.e., more layers or parameters). This is evident, as we see that enlarging GCN and GAT from standard 2 layers and 64 channels, to 2 layers and 256 channels (wide), or 64 layers and 64 channels (deep) does not yield similar improvements. We measured the runtimes using an Nvidia-RTX3090 with 24GB of memory, which is the same GPU used to conduct our experiments.}

\begin{table*}[t]
\footnotesize
   \setlength{\tabcolsep}{4pt}
  \caption{\textcolor{black}{Training and inference GPU runtimes (milliseconds), number of parameters (thousands), and node classification accuracy (\%) on Cora.}}
  \label{tab:runtimes}
  \begin{center}
  \begin{tabular}{lccccccccccc}
  \toprule
   \multirow{2}{*}{Metric}  & \multirow{2}{*}{GCN} & \multirow{2}{*}{GAT} & GCN  & GAT& GCN & GAT    & \multirow{2}{*}{RDGNN-{I}}  & \multirow{2}{*}{RDGNN-{S}}  \\
   & & & (wide) & (wide) & (deep) & (deep) &  \\
    \midrule
    Training time & 7.71 & 14.59  & 14.32 & 36.63 & 95.11 & 184.51 & 121.54 & 121.31 \\
    Inference time  & 1.75 & 2.98   &  2.86 & 7.57 & 12.93 &  38.96 & 33.87 & 33.11 \\
    Parameters & 104 & 105 & 565 & 567 & 358 & 360 & 436 & 190 \\
    Accuracy &  85.77 & 83.13  & 85.18 & 83.37 & 38.62 & 33.40  & 89.91 & 89.53\\
    \bottomrule
  \end{tabular}
  \end{center}
\end{table*}
%%%%%%%%%%%%%%%%%%%%%%%%%%%%%%%%%%%%%%%%%%%%%%%%%%%%%%%%%%%%%%%%%%%%%%%%%%%%%%%
%%%%%%%%%%%%%%%%%%%%%%%%%%%%%%%%%%%%%%%%%%%%%%%%%%%%%%%%%%%%%%%%%%%%%%%%%%%%%%%

\bibliographystyle{siamplain}
\bibliography{biblio}
\end{document}